\documentclass{article}

    \PassOptionsToPackage{numbers, compress}{natbib}
 \usepackage[preprint]{neurips_2026}

\usepackage[utf8]{inputenc} 
\usepackage[T1]{fontenc}    
\usepackage{hyperref}       
\usepackage{url}            
\usepackage{booktabs}       
\usepackage{amsfonts}       
\usepackage{nicefrac}       
\usepackage{microtype}      
\usepackage{xcolor}         

\usepackage{hyperref}       
\usepackage{url}            
\usepackage{booktabs}       
\usepackage{amsfonts}       
\usepackage{nicefrac}       
\usepackage{microtype}      
\usepackage{lipsum}
\usepackage{fancyhdr}       
\usepackage{graphicx}       
\usepackage[numbers]{natbib}

\usepackage{xcolor}
\usepackage{tcolorbox}
\tcbuselibrary{breakable}
\usepackage{mathtools}
\usepackage{hyperref} 
\usepackage{cleveref}
\usepackage{algorithm}
\usepackage[noend]{algorithmic}
\usepackage{graphicx}
\usepackage{textcomp}
\usepackage{multirow}
\usepackage{xspace}
\usepackage{listings}
\usepackage{caption}
\usepackage{amsthm}
\usepackage{float}
\usepackage{soul}
\usepackage{upquote}
\usepackage{listings}
\usepackage{enumitem}
\usepackage{extarrows}
\usepackage{stmaryrd}
\usepackage{bbm}
\usepackage{subcaption}
\usepackage{makecell}
\usepackage{array}
\usepackage{dsfont}
\usepackage{colortbl}
\usepackage{tikz}
\usepackage{wrapfig}
\usepackage{amssymb}
\usepackage{diagbox}
\usepackage{bm}
\usepackage{fvextra}
\usepackage{thmtools}

\DeclareMathOperator*{\argmin}{argmin}

\theoremstyle{plain}

\theoremstyle{definition}

\theoremstyle{remark}

\newcommand{\J}{\mathcal{J}}

\newcommand{\cL}{\mathcal{L}}

\newcommand{\N}{\mathcal{N}}

\newcommand{\cP}{\mathcal{P}}
\newcommand{\Q}{\mathcal{Q}}

\newcommand{\T}{\mathcal{T}}

\newcommand{\tsf}[1]{\textsf{#1}}

\title{Capturing LLM Capabilities via Evidence-Calibrated Query Clustering}

\author{Fangzhou Wu$^{1}$, Sandeep Silwal$^{1,\dagger}$, Qiuyi (Richard) Zhang$^{2,\dagger}$ \\
$^{1}$University of Wisconsin--Madison, $^{2}$Elorian AI\\
\texttt{fwu89@wisc.edu, silwal@cs.wisc.edu, richard@elorian.ai} \\
}

\begin{document}

\maketitle

\begingroup
\renewcommand{\thefootnote}{\fnsymbol{footnote}}
\footnotetext[2]{The remaining authors are listed alphabetically.}
\endgroup

\begin{abstract}
    Query clustering organizes queries into groups that reflect shared latent capability demands, enabling capability-aware LLM evaluation.
    Existing clustering methods, which primarily rely on semantic taxonomies or embeddings, often fail to capture such latent capability requirements due to a misalignment between surface-level semantics and actual model performance. 
    We propose \tsf{ECC}, an algorithm that calibrates prior semantic embeddings using limited posterior model comparisons to bridge the gap between surface-level semantics and latent capability requirements. 
    \tsf{ECC} characterizes each cluster through a capability profile parameterized by a Bradley-Terry model and uses trainable mixture weights 
    to accommodate queries with mixed capability demands, jointly learning a {flexible}, capability-aware clustering structure that supports query-specific inference of LLM capabilities.
    Extensive quantitative and qualitative evaluations demonstrate that \tsf{ECC} significantly improves LLM capability ranking quality, outperforming human-labeled and embedding-based baselines by an average of \textbf{17.64} and \textbf{18.02} percentage points, respectively, and proves effective in downstream tasks such as query routing.
    Our code is available at \url{https://github.com/fzwark/ECC}.
\end{abstract}

\section{Introduction}

Large Language Models trained on extensive corpora exhibit broad domain knowledge and diverse capabilities, as evaluated by standard benchmarks~\citep{agarwal2025gpt, team2024gemini}. 
Since different LLMs excel in different tasks (i.e. \emph{query-dependent}), accurately assessing their strengths is essential for reliable deployment in domain-specific and personalized settings~\citep{frieder2023mathematical, novikov2025alphaevolve}.
However, coarsely curated datasets, or groups of queries, often yield noisy evidence that is insufficient for characterizing latent LLM capabilities, especially when queries within a group differ widely in their capability demands.
This motivates clustering queries into subsets that share similar underlying capability demands, each corresponding to a distinct capability profile, which can enable a capability-aware LLM evaluation~\citep{wang2024mmlu,hendrycks2020measuring}.

Standard query ``clustering'' methods typically rely either on static and heuristic human-labeled semantic taxonomies, such as ``Mathematics'' or ``Chemistry''~\citep{zhang2024mathverse, guo2023can}, or on unsupervised grouping based on semantic embeddings~\citep{arenahard2024, miroyan2025search}.
While intuitive, \textbf{such surface-level semantic clusters often fail to capture the true latent capability requirements of queries} (Figure~\ref{fig:misalign}):
(i) A single label (e.g., ``Mathematics'') may encompass queries requiring vastly different levels of underlying capabilities, from rote recall to multi-step deductive derivation, or even a combination of distinct capability requirements. 
(ii) Conversely, queries requiring the same underlying capabilities, such as multi-step deductive derivation, may be scattered across different semantic clusters due to superficial topical differences. 
This misalignment with the true capability distributions can limit, or even degrade, the generalizability of model capability estimates to unseen queries (Figure~\ref{fig:human}), highlighting the insufficiency of prior semantic signals.
Although recent works leverage ``LLM-as-a-judge''~\citep{zheng2023judging} to extract explicit skill labels from model responses and build capability structures~\citep{moayeri2025unearthing, tian2025skillverse}, the resulting partitions of queries are typically hard and discrete, and thus do not naturally accommodate queries that require a complex mixture of capabilities.
Furthermore, these approaches rely on exhaustively evaluating each query across numerous models to determine cluster assignments, which incurs substantial computational cost with LLM-based judges.

These fundamental limitations motivate us to propose \emph{\textbf{E}vidence-\textbf{C}alibrated \textbf{C}lustering} (\tsf{ECC}), a novel algorithm that efficiently learns underlying capability-aware cluster structure by directly incorporating evidence of model performance, which we term \emph{posterior evidence}, into the clustering process.
Rather than relying solely on \emph{prior} semantic embeddings, \tsf{ECC} incorporates limited pairwise model comparisons over query responses as \emph{sample-efficient} signals of LLM capabilities, characterizing each cluster with a capability profile parameterized by a Bradley-Terry model~\citep{bradley1952rank} optimized over these comparisons. 
To accommodate queries requiring mixed capabilities, \tsf{ECC} introduces trainable \emph{soft responsibilities} to weight the contribution of query embeddings and associated model comparisons across clusters, quantifying how strongly each query aligns with the capabilities represented by each cluster.
To reconcile prior semantics with posterior evidence, \tsf{ECC} jointly optimizes cluster centroids, capability profiles, and soft responsibilities through alternating updates.

For unseen queries, \tsf{ECC} employs \emph{probe-informed mixture inference}, which leverages a single probe signal (i.e., a single pairwise model comparison) alongside the query embedding to derive a query-specific, capability-aware ranking over LLMs (see~\Cref{sec:method}).

\begin{figure}[t]
    \centering
    \includegraphics[width=0.6\linewidth]{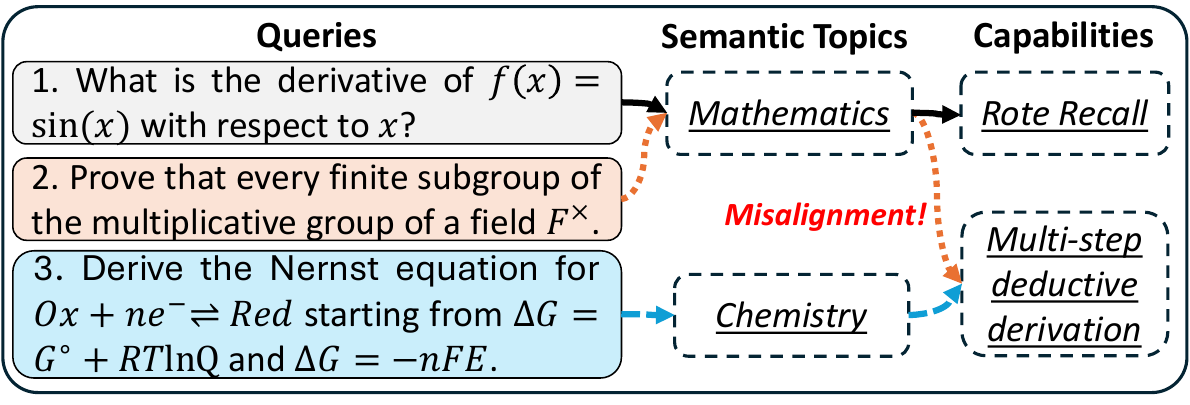}
    \caption{
    Misalignment between surface-level semantics and latent capability demands. 
    }
    \vspace{-1em}
    \label{fig:misalign}
\end{figure}

We validate the effectiveness of \tsf{ECC} through extensive quantitative experiments across diverse benchmarks, demonstrating average improvements in ranking quality on unseen queries of \textbf{17.64} percentage points (pp) over human-labeled clustering and \textbf{18.02} pp over embedding-only clustering. 
We analyze the clusterings qualitatively to confirm that incorporating our posterior evidence yields interpretable capability-aware clusters, separating semantically similar queries with divergent capability needs and merging those with aligned needs.
Finally, we apply \tsf{ECC} to two representative downstream tasks: 
(1) {guided optimal query routing}, where it improves the response quality of routed queries by \textbf{16.6\%} on average over embedding-only clustering, and 
(2) {sample-efficient new model ranking}, where it achieves \textbf{16.3} pp higher ranking quality under a limited comparison budget.

In summary, our contributions are:

\begin{enumerate}[leftmargin=*, noitemsep]
    \item We propose \tsf{ECC}, a novel query clustering algorithm that integrates posterior model comparisons with prior embeddings to overcome the misalignment between surface-level semantics and latent capability requirements.
    \item By characterizing each cluster with a capability profile parameterized by a Bradley-Terry model, \tsf{ECC} enables sample-efficient estimation of relative model strengths from limited pairwise comparisons.
    \item \tsf{ECC} captures the mixed capability requirements of queries by introducing trainable responsibilities, yielding a flexible, capability-aware clustering structure that enables query-specific inference of LLM capabilities.
    \item Extensive quantitative and qualitative evaluations show that \tsf{ECC} produces more capability-coherent clusters and significantly improves capability-aware ranking quality, outperforming baselines in two downstream tasks.
\end{enumerate}

\subsection{Related Work}
\noindent\textbf{Capability-Aware Clustering for LLM Evaluation.}
Recent works move beyond human-defined taxonomies~\citep{wang2019superglue, ying2024mmt, liu2024mmbench, srivastava2023beyond, al2024unibench} by inducing latent capability structure from evaluation data for finer-grained LLM evaluation~\citep{tian2025skillverse, moayeri2025unearthing, wei2025kcluster, luo2024llm, murahari2024qualeval}.
For example, SkillVerse~\citep{tian2025skillverse} builds a hierarchical skill dendrogram using LLM-as-a-judge critiques, while other work extracts task attributes or knowledge components to guide clustering and improve performance prediction~\citep{moayeri2025unearthing, wei2025kcluster}.
In contrast to these methods that derive fixed groupings from explicit attribute representations, \tsf{ECC} learns soft capability-aligned clusters that support query-conditional inference over LLM capabilities.

\noindent\textbf{Sample-Efficient LLM Evaluation.}
Full-benchmark evaluation is costly, as it requires scoring all models over thousands of items~\citep{li2025adaptivetestingllmevaluation, chiang2024chatbot}.
Recent work reduces this cost via comparison-based ranking and informative item selection~\citep{chiang2024chatbot, son2025arena, li2025adaptivetestingllmevaluation, kipnis2025metabench, baker2001basics, song2025irt}.
For example, Chatbot Arena~\citep{chiang2024chatbot} fits Bradley-Terry models~\citep{bradley1952rank} to pairwise LLM comparisons to produce scalable global leaderboards, while ATLAS~\citep{li2025adaptivetestingllmevaluation} applies IRT-based adaptive testing to estimate global model ability with far fewer benchmark questions.
These methods primarily target a \emph{global} leaderboard shared across queries, whereas \tsf{ECC} 
infers a {query-specific} mixture ranking over LLMs, enabling fine-grained adaptation across heterogeneous queries.
See extended related work in~\Cref{app:related}.

\subsection{Bradley-Terry Model for LLM Ranking.}
The Bradley-Terry (BT) model is a widely used framework for estimating latent strength scores of items from pairwise comparison data~\citep{bradley1952rank}.
In the context of LLM evaluation, it has recently been adopted to rank a set of language models from limited 
pairwise comparisons over their responses to given queries~\citep{chiang2024chatbot}.
Formally, let $\Q$ denote a set of queries and let there be $M$ different LLMs. For each query $q \in \Q$, we assume access to a set of pairwise model comparisons $\N_q=\{(i,j,y)\}$, where $i,j \in [M]$ refer to two LLMs $m_i$ and $m_j$, and $y\in\{0, 1\}$ denotes the comparison outcome over their responses to query $q$, with $ y = 1$ indicating that $m_j$ is preferred over $m_i$, and $y=0$ otherwise. 
In the BT model, the probability that $m_j$ is preferred over $m_i$ is defined using the logistic function $\sigma(\cdot)$: 
\vskip -0.17in
\begin{equation}
    P_{\bm{\theta}}(m_j \succ m_i) = \sigma(\theta_j - \theta_i) = \frac{1}{1 + e^{-(\theta_j - \theta_i)}}, 
\end{equation}
\vskip -0.08in
where $\bm{\theta} \in \mathbb{R}^M$ denotes the vector of latent strengths for the $M$ models.
To learn $\bm{\theta}$, we minimize the BT model loss (neg. log-likelihood) over all pairwise comparisons across all queries:
$\ell(\bm{\theta}) := -\sum_{q\in\Q} \sum_{(i, j, y) \in \N_q} \log P (y\mid i, j; \bm{\theta})$,
where the log-probability $ \log P (y\mid i, j; \bm{\theta})$ is defined as $ y \log \sigma(\theta_j - \theta_i) + (1-y) \log \sigma(\theta_i - \theta_j).$

\noindent\textbf{Capability-Aware Ranking Quality.}
We quantify the quality of a learned model capability ranking $\bm{\theta}$ for a query $q$ by how well it explains the observed pairwise comparison outcomes, operationalized via the per-query BT loss:
\vskip -0.2in
\begin{equation}
    \label{eq:btloss}
    \ell_{\text{comp}}(q; \bm{\theta}) = -\frac{1}{|\N_q|} \sum_{(i, j, y) \in\N_q} \log P(y\mid i,j;\bm{\theta}).
\end{equation}
\vskip -0.1in
Lower values of $\ell_{\text{comp}}(q; \bm{\theta})$ indicate a better fit to the observed evidence and thus higher ranking quality.
Crucially, this metric is probability-aware: it evaluates not only the ordinal consistency of the ranking with observed comparisons but also the predictive confidence \citep{frick2025prompttoleaderboard}.

\section{Methodology}\label{sec:method}

The main intuition behind \emph{\textbf{E}vidence-\textbf{C}alibrated \textbf{C}lustering} (\tsf{ECC}) stems from a key limitation in existing query clustering methods that only rely on prior semantic embeddings: posterior evidence with informative model capability signals, such as pairwise model comparisons, is only used after clustering. 
This creates a fundamental gap: capability evidence cannot influence cluster formation, resulting in semantically coherent clusters that may still be misaligned with latent capability demands.
\tsf{ECC} instead \emph{evaluates first, then clusters}, where it first collects comparison evidence on unclustered queries and then uses it to calibrate and reshape the embedding-based semantic structure.
Concretely, given an input number of clusters $K$, for each query $q\in\Q$, \tsf{ECC} jointly leverages its semantic embedding $\bm{x}_q$ and its associated pairwise comparisons $\N_q$ to learn a flexible latent capability clustering that enables reliable query-specific capability inference of LLMs.

\noindent\textbf{System Parameters $\bm{\Theta}$ and Responsibilities $\cP$.}
To achieve this, \tsf{ECC} characterizes each specific cluster $k \in [K]$ using two complementary components:
(i) a \textbf{\emph{semantic centroid}} $\bm{c}_k$, capturing the {prior} semantic signals in the embedding space, 
and (ii) a \textbf{\emph{capability profile}} $\bm{\theta}^{(k)} \in \mathbb{R}^M$ parameterizing a BT model that represents the posterior capability strengths of the $M$ LLMs on queries associated with the cluster.
Together, these define the overall \textbf{\emph{system parameters}}: $\bm{\Theta} = \{\bm{\theta}^{(k)}, \bm{c}_k\}_{k=1}^K$.

Since a single query may involve multiple capability demands, \tsf{ECC} adopts soft assignments rather than hard clustering by introducing a set of \emph{\textbf{responsibilities}} $\cP = \{r_{qk}\}$.
For each query $q$, the responsibilities satisfy $r_{qk}\geq 0$ and $\sum_{k=1}^K r_{qk}=1$, where $r_{qk}$ measures how strongly $q$ aligns with the capability profile represented by cluster $k$.
Together, these cluster-level profiles define a \emph{fine-grained capability structure}: each cluster serves as a basic capability unit, and each query forms its own capability representation by softly combining these units through its responsibilities.

\noindent\textbf{Alternating Optimization.}
Unlike purely embedding-based clustering methods (e.g., K-Means~\citep{mcqueen1967some}) that update only semantic centroids, \tsf{ECC} follows a two-stage alternating optimization procedure, where it iteratively updates (i) the system parameters $\bm \Theta$ and (ii) the soft responsibilities $\cP$, allowing both semantic priors and posterior evidence to jointly shape the resulting clusters.

\emph{\underline{Optimization Objective}}.
For each query $q$ and cluster $k$, we define a \emph{combined loss}: 
\vskip -0.17in
\begin{equation}
\label{eq:total_loss}
    \ell(q,k;\bm \Theta) := \ell_{\text{comp}}(q;\bm{\theta}^{(k)}) + \lambda \ell_{\text{emb}}(q;\bm{c}_k),
\end{equation}
\vskip -0.13in
where $\ell_{\text{comp}}(q;\bm{\theta}^{(k)})$ measures how well the capability profile of cluster $k$ fits the pairwise comparisons of $q$ (see~\Cref{eq:btloss}), and $\ell_{\text{emb}}(q;\bm{c}_k):= -\bm{x}_q^\top \bm{c}_k$ measures the semantic mismatch between the query embedding and the cluster centroid. 
The trade-off parameter $\lambda$ balances the influence of the posterior evidence and the prior embedding signal. Instead of directly minimizing~\Cref{eq:total_loss} via hard assignments that allocate each query to a single cluster, \tsf{ECC} adopts a soft optimization approach.
It jointly optimizes the system parameters $\bm \Theta$ and soft responsibilities $\cP$ by minimizing the following \emph{entropy-regularized expected loss}:
\vskip -0.2in
\begin{equation}
    \cL({\cP}, \bm{\Theta}) := \sum_{q \in \Q} \sum_{k=1}^K  r_{qk} {\ell(q,k;\bm \Theta)} - \T \sum_{q \in \Q} H(\bm{r}_q),
    \label{eq:objective}
\end{equation}
\vskip -0.15in
where $H(\bm{r}_q) = - \sum_{k=1}^K r_{qk} \log r_{qk}$ is the Shannon entropy of the assignment distribution, and temperature $\T>0$ controls the strength of the entropy regularization.

\begin{wrapfigure}[20]{R}{0.53\textwidth}
\vspace{-2.3em}
\begin{minipage}{\linewidth}
\begin{algorithm}[H]
\caption{\small \tsf{ECC}: Evidence-Calibrated Clustering}
\label{alg:ecc}
\small
\begin{algorithmic}[1]
    \STATE {\textbf{Input}:} Query set $\Q$ with embeddings $\{\bm{x}_q\}$,
    model comparisons $\N = \{\N_q\}$, number of clusters $K$.
    \STATE Initialize responsibilities $\cP = \{r_{qk}\}$ randomly.
    \STATE Normalize embeddings: $\bm{x}_q \leftarrow \bm{x}_q / \| \bm{x}_q\|_2$, $\forall q\in \Q$.

    \REPEAT
        \STATE {\ttfamily\textcolor{red}{\(\triangleright\) {/* Optimize System Parameters */}}}
        \FOR{$k \in [K]$}
            \STATE {\ttfamily\textcolor{red}{\(\triangleright\) {/* Update Centroids $\bm{c}_k$ */}}}
            \STATE $\bm{c}_k \leftarrow
            \mathrm{norm}\!\left(
            \left(\sum_{q \in \Q} r_{qk} \bm{x}_q\right)/
            \sum_{q \in \Q} r_{qk}
            \right)$

            \STATE {\ttfamily\textcolor{red}{\(\triangleright\) {/* Update BT Model $\bm{\theta}^{(k)}$ */}}}
            \STATE $\bm{\theta}^{(k)} \leftarrow
            \operatorname*{\argmin}_{\bm{\theta}}
            \sum_{q \in \Q} r_{qk} \ell_{\text{comp}}(q; \bm{\theta})$
        \ENDFOR

        \STATE {\ttfamily\textcolor{red}{\(\triangleright\) {/* Optimize Responsibilities */}}}
        \FOR{$q \in \Q$}
            \FOR{$k \in [K]$}
                \STATE $h_{qk} \leftarrow
                -\left[
                \ell_{\text{comp}}(q;\bm{\theta}^{(k)})
                + \lambda \ell_{\text{emb}}(q;\bm{c}_k)
                \right]/\T$
            \ENDFOR

            \STATE $r_{qk} \leftarrow
            \frac{\exp(h_{qk})}
            {\sum_{j=1}^K \exp(h_{qj})},
            \quad \forall k\in[K]$
        \ENDFOR

    \UNTIL{Convergence}

    \STATE {\bfseries Output:} Responsibilities $\cP$ and system parameters $\bm{\Theta}$.
\end{algorithmic}
\end{algorithm}
\end{minipage}
\end{wrapfigure}

\emph{\underline{Stage I: Updating $\bm{\Theta}$ given $\cP$}}. 
Fixing the current responsibilities $\cP$, the entropy term in~Objective~\ref{eq:objective} becomes constant, and the optimization reduces to
$\min_{\bm{\Theta}} \sum_{q,k} r_{qk} \ell(q,k;\bm \Theta)$.
This objective separates across clusters and yields two closed-form sub-updates.
For each cluster $k$, \tsf{ECC} first updates its semantic centroid $\bm{c}_k$ by minimizing $\sum_q r_{qk}\,\ell_{\text{emb}}(q;\bm c_k)$, which results in a normalized posterior-weighted mean (line 8 in~\Cref{alg:ecc}):  $  \bm{c}_k \leftarrow \frac{\sum_{q \in \Q} {r_{qk}} \bm{x}_q}{\sum_{q \in \Q} {r_{qk}}}, \quad \bm{c}_k \leftarrow \bm{c}_k / \|\bm{c}_k\|_2.$

Next, it updates the capability profile of each cluster $\bm \theta^{(k)}$ by minimizing the weighted BT loss:  
$  \bm{\theta}^{(k)} = \operatorname*{\argmin}_{\bm{\theta}} \sum_{q \in \Q} {r_{qk}} \ell_{\text{comp}}(q; \bm{\theta}).$
The responsibilities enable each query to inform multiple clusters to varying degrees, which is essential for modeling queries that require multiple capabilities.

\emph{\underline{Stage II: Updating $\cP$ given $\bm{\Theta}$}}. 
In the second stage, \tsf{ECC} updates $\cP$ while keeping the current system parameters $\bm{\Theta}$ fixed.
Since~Objective~\ref{eq:objective} is separable across queries, each $\bm r_q$ admits a closed-form update (lines 12-15 in~\Cref{alg:ecc}):
\vskip -0.15in
\begin{equation}
\label{eq:posterior_update}
    r_{qk} \leftarrow \frac{\exp\left(-\ell(q,k;\bm \Theta)/\T\right)}{\sum_{j=1}^K \exp\left(-\ell(q,j;\bm \Theta)/\T\right)}.
\end{equation}
\vskip -0.09in

\emph{\underline{Convergence}}.
\tsf{ECC} terminates at iteration $t$ when the responsibilities and centroids stabilize: the average KL divergence between successive responsibilities $\frac{1}{|\Q|}\sum_{q}\mathrm{KL}(\bm r_q^{(t)}\|\bm r_q^{(t-1)})$ and the centroid shift $\max_k\|\bm c_k^{(t)}-\bm c_k^{(t-1)}\|_2$ fall below predefined thresholds.

\noindent\textbf{Probe-Informed Mixture Inference.}
Since \tsf{ECC} learns weighted mixture capability clusters, inferring LLM capabilities for a new query $q$ first requires estimating its soft assignment $\bm{r}_q$ across clusters.
A naive approach is to infer $\bm{r}_q$ using embeddings alone; however, semantic proximity does not reliably imply capability proximity.
Furthermore, this embedding-only inference ignores the fact that the learned cluster structure is jointly shaped by both embeddings and comparison evidence during clustering, resulting in a signal mismatch between clustering and inference.
To reconcile this mismatch in a lightweight manner, we introduce a single \emph{probe} signal during inference:
for each query, we randomly collect \textbf{only one pairwise model comparison} and use it to calibrate $\bm{r}_q$ via the same entropy-regularized update as in \Cref{eq:total_loss,eq:posterior_update}, with $\ell_{\text{comp}}$ computed on the probe only. 
Given the calibrated $\bm{r}_q$, \tsf{ECC} estimates pairwise preferences using the following weighted mixture:
\vskip -0.18in
\begin{equation}
\label{eq:probe_mixture_pred}
P(m_j \succ m_i \mid q) = \sum_{k=1}^K r_{qk} \sigma \left(\bm{\theta}^{(k)}_j-\bm{\theta}^{(k)}_i\right).
\end{equation}
\vskip -0.1in
Although it only costs a single comparison, the probe provides high-leverage capability evidence, with its effectiveness empirically demonstrated in~\Cref{fig:main}. We prove the following convergence guarantee in Appendix~\Cref{app:proof}:
\begin{restatable}[Monotone convergence of \tsf{ECC}]{theorem}{MainTheorem}
\label{thm:convergence}
Assume that the objective $\cL(\cP,\bm{\Theta})$ is finite at initialization.
Then, the objective values $\{\cL^{(t)}\}_{t \ge 0}$ produced by \tsf{ECC} are monotonically non-increasing:
    $\cL(\cP^{(t+1)}, \bm \Theta^{(t+1)}) \leq \cL(\cP^{(t)}, \bm \Theta^{(t)})$.
Moreover, $\{\cL^{(t)}\}$ converges to a finite limit. 
\end{restatable}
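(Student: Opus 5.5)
The argument is the standard block-coordinate-descent argument, applied to one iteration of \Cref{alg:ecc}. I decompose iteration $t$ into Stage I followed by Stage II and show that
\[
\cL(\cP^{(t)},\bm\Theta^{(t)}) \ge \cL(\cP^{(t)},\bm\Theta^{(t+1)}) \ge \cL(\cP^{(t+1)},\bm\Theta^{(t+1)}).
\]
Convergence then follows from monotonicity together with a lower bound on $\cL$.

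\textbf{Stage I.} With $\cP^{(t)}$ fixed, the entropy term is constant. The remaining sum splits over clusters into $\sum_q r_{qk}\ell_{\text{comp}}(q;\bm\theta^{(k)})$ and $\lambda\sum_q r_{qk}\ell_{\text{emb}}(q;\bm c_k)$, which are minimized separately.

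For the centroid, I restrict to the unit sphere $\|\bm c_k\|_2=1$, which is the feasible set implied by the normalization in line 8. Then $-\sum_q r_{qk}\bm x_q^\top\bm c_k$ is minimized by $\bm c_k$ equal to the normalized vector $\sum_q r_{qk}\bm x_q$, by Cauchy--Schwarz. The division by $\sum_q r_{qk}$ is a positive rescaling and does not change the normalized direction. If the weighted sum vanishes, any unit vector is optimal, and I keep the old centroid.

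For the capability profile, I take $\bm\theta^{(k)}$ to be a minimizer of the weighted BT loss, so the new value is no larger than the loss at $\bm\theta^{(k),(t)}$. The main obstacle is that this argmin may fail to exist. If some model wins every comparison, the loss has infimum $0$ but is not attained, and $\bm\theta$ diverges. I would handle this by interpreting line 10 as any update that does not increase the weighted loss, for example a finite number of gradient steps from the previous iterate. Alternatively, I would assume the standard BT existence condition (strong connectivity of the comparison graph) or a small ridge term. Either choice gives the first inequality, and I would state the interpretation explicitly. The BT loss is also invariant to shifts of $\bm\theta$, so I fix the mean to zero. This does not affect the objective.

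\textbf{Stage II.} With $\bm\Theta^{(t+1)}$ fixed, the objective separates over queries. Each $\bm r_q$ minimizes $\sum_k r_{qk}\ell_{qk} - \T H(\bm r_q)$ over the simplex. This function is strictly convex, and by the Gibbs variational principle its minimizer is the softmax in \Cref{eq:posterior_update}, with minimum value $-\T\log\sum_k e^{-\ell_{qk}/\T}$. Alternatively, one checks the Lagrangian stationarity conditions and then uses convexity. This gives the second inequality, and the softmax keeps all $r_{qk}>0$, so the objective stays finite.

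\textbf{Boundedness and conclusion.} I bound $\cL$ below termwise:
\begin{itemize}
\item $\ell_{\text{comp}}\ge 0$, since it is a negative log-probability;
\item $\ell_{\text{emb}}\ge -1$, since both $\bm x_q$ and $\bm c_k$ are unit vectors;
\item $H(\bm r_q)\le\log K$.
\end{itemize}
Hence $\cL\ge -|\Q|(\lambda+\T\log K)$, assuming $\lambda\ge0$. The sequence $\{\cL^{(t)}\}$ starts finite by hypothesis, is non-increasing, and is bounded below, so it converges to a finite limit by the monotone convergence theorem.
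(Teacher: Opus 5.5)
Your proposal is correct and is essentially the paper's proof: block coordinate descent with the softmax (Lagrangian/Gibbs) minimizer in $\cP$, Cauchy--Schwarz on the unit sphere for $\bm c_k$, a non-increasing BT step for $\bm\theta^{(k)}$, and the same lower bound $-|\Q|(\lambda+\T\log K)$. The paper lists the $\cP$-step before the $\bm\Theta$-step, which is immaterial; your caveat that the weighted BT minimizer need not be attained is a real refinement of the paper's claim that convexity lets gradient methods find the global minimum, though if you choose the ridge option, the ridge term must be included in $\cL$ itself for monotonicity to hold.
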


\section{Evaluation}\label{sec:eval}

\begin{figure}[t]
    \centering
    \includegraphics[width=0.6\linewidth]{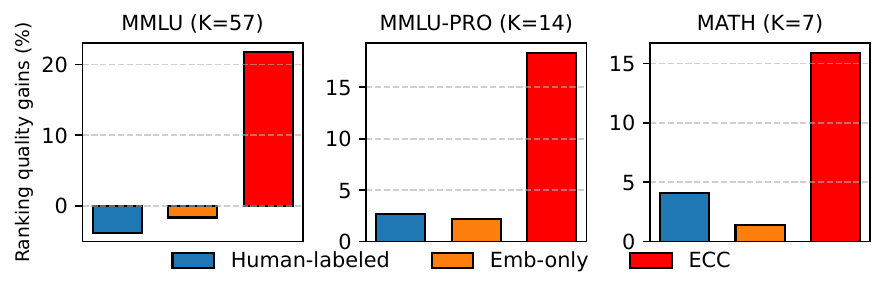}
    \caption{
    Ranking quality gains of different clustering methods on three datasets with human taxonomies.
    \tsf{ECC} consistently outperforms the human-labeled clustering.  
    }
    \label{fig:human}
    \vskip -0.1in
\end{figure}

\begin{figure*}[t]
    \centering
    \includegraphics[width=0.85\linewidth]{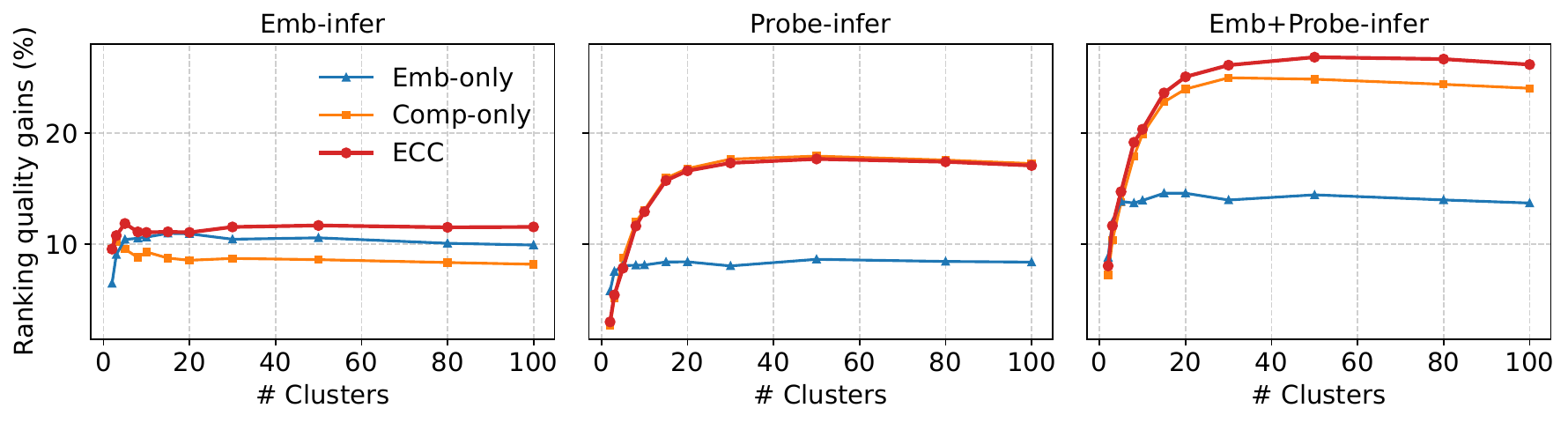}
    \caption{
   Ranking quality gains of different clustering methods on SPROUT across three inference signals and varying numbers of clusters $K$.
   \tsf{ECC} consistently achieves the best performance.
    }
    \label{fig:main}
    \vskip -0.2in
\end{figure*}

\begin{table*}[t]
\centering
\small
\caption{
Ranking quality gains (\%) for different \textbf{clustering methods (columns)} and \textbf{inference signals (rows)} across three benchmarks with $K=30$.
\tsf{ECC} consistently achieves superior performance across all inference signals.
}
\setlength{\tabcolsep}{3.5pt}
\label{tab:main}
\scalebox{0.85}
{
\begin{tabular}{lccccccccc}
\toprule
\multirow{2.5}{*}{\textbf{\makecell{Inference\\Signal}}} 
& \multicolumn{3}{c}{\textbf{SPROUT}} 
& \multicolumn{3}{c}{\textbf{RouterBench}} 
& \multicolumn{3}{c}{\textbf{Open LLM Leaderboard v2}} \\
\cmidrule(lr){2-4}\cmidrule(lr){5-7}\cmidrule(lr){8-10}
& \textbf{Emb-only} & \textbf{Comp-only} & \textbf{\tsf{ECC}}
& \textbf{Emb-only} & \textbf{Comp-only} & \textbf{\tsf{ECC}}
& \textbf{Emb-only} & \textbf{Comp-only} & \textbf{\tsf{ECC}} \\
\midrule
\textbf{Emb-infer} & 10.45 & 8.72 & 11.55 & 6.84 & 1.93 & 8.25 & 0.66 & -0.05 & 1.25 \\
\textbf{Probe-infer} & 8.04 & 17.67 & 17.32 & 5.19 & 24.74 & 24.19 & 1.88 & 20.15 & 19.81 \\
\textbf{Emb+Probe-infer} & 13.98 & 25.01 & \textbf{26.15} & 8.39 & 28.31 & \textbf{31.05} & 1.82 & 20.96 & \textbf{21.04}  \\
\bottomrule
\end{tabular}
}
\vskip -0.12in
\end{table*}

\noindent\textbf{Benchmarks.} 
We evaluate on three widely used multi-LLM benchmarks:
i) SPROUT~\citep{somerstep2025carrot},
ii) RouterBench~\citep{hu2024routerbench},
and iii) Open LLM Leaderboard v2~\citep{fourrier2024open}.
The queries in these benchmarks span diverse datasets, including MMLU~\citep{hendrycks2020measuring}, MMLU-Pro~\citep{wang2024mmlu}, and MATH~\citep{hendrycks2021measuring}.
For each benchmark, all model responses are scored by the same judge to ensure consistency.
We split queries into clustering/test sets with varying clustering proportions from 20\% to 90\%, using 80\% by default.
For each query, we decompose the full set of response scores into pairwise comparisons between models, using non-tied comparisons by default and considering tie-aware variants in the ablation study.
During clustering, we sample a varying number of model comparisons per query (from 2 to 24), with 7 used by default, and vary the number of clusters from 2 to 100, using 30 as the default.
For each test query, unless otherwise specified, we hold out one comparison as a probe for responsibility inference and evaluate on the remaining comparisons.
We use \texttt{bge-base-en-v1.5}~\citep{bge_embedding} for query embeddings, and fix $\lambda=3$ and $\T=0.1$ unless otherwise specified.

\noindent\textbf{Baselines.}
We consider the following clustering baselines:
(i) \emph{Human-labeled clustering}, which applies hard cluster partitions using the standard human-annotated taxonomies already provided by the benchmarks;
(ii) \emph{Embedding-only clustering (Emb-only)}, which uses only semantic embeddings for clustering, following the same soft update rule as \tsf{ECC}.
Both baselines represent prior semantic clustering structures.
We additionally consider a variant of \tsf{ECC}, \emph{\textbf{Comparison-only (Comp-only)}}, which uses only comparison signals by setting $\lambda=0$.
For Emb-only, Comp-only, and \tsf{ECC}, we use the same mixture ranking inference in~\Cref{sec:method} for a fair comparison.
We also include Prompt-to-Leaderboard (P2L)~\citep{frick2025prompttoleaderboard} as a non-clustering baseline.

\noindent\textbf{Inference Signals.}
Given the learned clusters, we consider three inference signals for estimating the LLM ranking for a new query:
(i) \emph{Emb-infer}, which only uses the embedding signal of the query;
(ii) \emph{Probe-infer}, which only uses a probe model comparison of the query; 
(iii) \emph{Emb+Probe-infer}, which combines both signals.
Emb+Probe-infer is used as the default unless stated otherwise.

\noindent\textbf{Metrics.}
We use two main metrics to evaluate different clustering methods:
\textbf{(1) Clustering performance.}
We evaluate the effectiveness of a capability clustering structure via its impact on \emph{capability-aware ranking quality} over unseen queries.
Specifically, we report the \emph{ranking quality gains}, defined as the average reduction in per-query BT loss (\Cref{eq:btloss}) on pair-wise model comparisons from unseen queries, relative to a single global BT model trained without clustering.
\textbf{(2) Hard partition structure analysis.}
To evaluate the structural properties of the hard cluster partitions (obtained by assigning each query to the cluster with the largest responsibility), we report \emph{Kendall-$\tau$} to measure the consistency among the BT profiles across clusters, as well as \emph{Adjusted Rand Index (ARI)} and \emph{Normalized Mutual Information (NMI)} to quantify the agreement between two different partitions.
Additional experimental details are in~\Cref{app:setup}.

\begin{figure*}[t]
    \centering
    \includegraphics[width=0.87\linewidth]{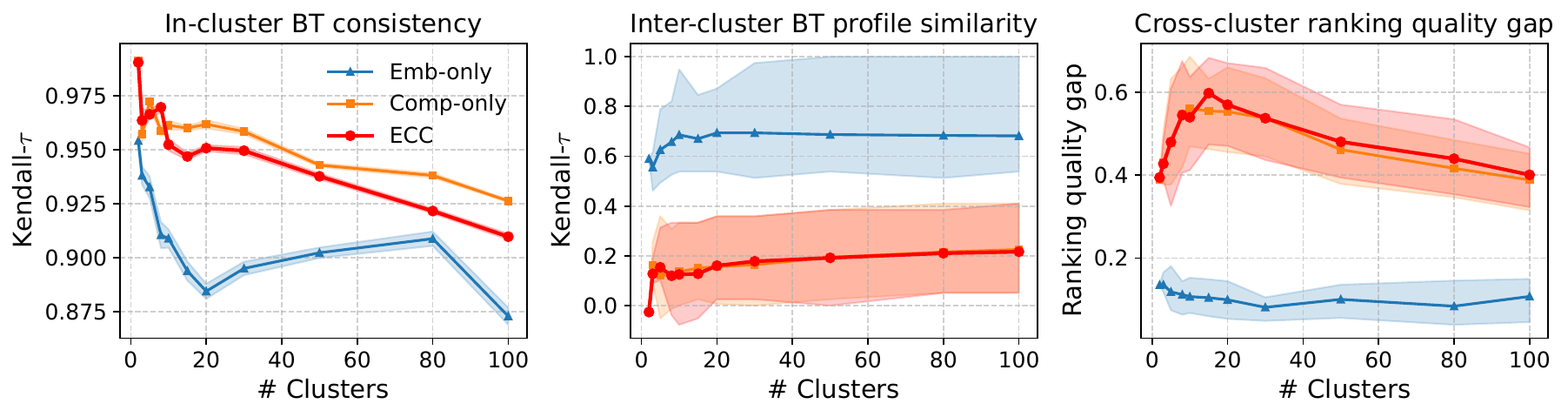}
    \caption{
    Consistency and separation of induced hard partitions on SPROUT, varying $\#$ of clusters.
    Left: in-cluster BT profile consistency ($\uparrow$);
    Middle: inter-cluster BT profile similarity ($\downarrow$);
    Right: cross-cluster ranking quality gap ($\uparrow$).
    Methods (\tsf{ECC}, Comp-only) using comparison signals has higher within-cluster consistency, lower inter-cluster similarity, and larger cross-cluster quality gaps.
    }
    \label{fig:consistency}
    \vskip -0.2in
\end{figure*}

\subsection{Main Results}
We compare different clustering methods across three inference signals, evaluating their performance on unseen queries with diverse capability demands, and address the following questions.

\noindent\textbf{RQ1: How does \tsf{ECC} compare to human-labeled clustering?}
We evaluate on three datasets with complete human-defined taxonomies:
(i) MMLU with 57 subject categories,
(ii) MMLU-Pro with 14 domain labels, and
(iii) MATH with 7 problem types.
For human-labeled clustering, we directly assign each query to its benchmark-provided ground-truth category and compute capability-aware ranking using the BT model trained within that category; thus, no additional test-time probe is needed for assignment.
For \tsf{ECC} and embedding-only clustering, we use Emb+Probe-infer to compute query-to-cluster responsibilities and induce mixture rankings.
For fairness, we set the number of clusters for \tsf{ECC} and embedding-only clustering to match the taxonomy granularity of each dataset.
As shown in Figure~\ref{fig:human}, \tsf{ECC} consistently achieves the highest capability-aware ranking quality gains on all three datasets, improving over human-labeled clustering by \textbf{17.64} pp and over embedding-only clustering by \textbf{17.99} pp on average.
Notably, despite using ground-truth semantic categories, human-defined taxonomies do not reliably reflect the underlying capability structure, yielding only marginal gains over the global BT model, and even degrading quality on MMLU.
These results underscore the limitations of both human-labeled and embedding-only clustering, and demonstrate that \tsf{ECC} effectively learns cluster structures aligned with the latent capability demands of queries, enabling more reliable capability inference for unseen queries.

\noindent\textbf{RQ2: How does the choice of clustering signals affect capability-aware ranking quality?}
To evaluate the impact of clustering signals, we compare \tsf{ECC}, embedding-only, and comparison-only clustering across benchmarks.
As shown in Figure~\ref{fig:main}, on SPROUT, \tsf{ECC} consistently achieves the best ranking quality across inference signals (additional results for other benchmarks are provided in~\Cref{fig:main_full}).
Compared to embedding-only clustering, Table~\ref{tab:main} shows that with $K{=}30$, \tsf{ECC} improves ranking quality gains by an average of \textbf{15.40} pp under Probe-infer, and \textbf{18.02} pp under Emb+Probe-infer across benchmarks.
Even when inference relies solely on embeddings (Emb-infer), \tsf{ECC} still outperforms embedding-only clustering.
The results highlight the limitations of relying on prior embeddings alone in clustering and the advantages of incorporating posterior comparison evidence during clustering.
Notably, comparison-only clustering performs well only when probe signals are available, but degrades significantly under Emb-infer. 
This sensitivity reveals a signal mismatch: clusters learned purely from comparison evidence do not provide a reliable capability inference when the inference signal relies on embeddings alone.
By integrating embeddings with comparison evidence, \tsf{ECC} produces a capability-aligned clustering structure that remains robust across inference signals, and outperforms comparison-only clustering under full-signal inference.
These results suggest that embeddings serve as a \emph{complementary} semantic prior to comparison evidence, enabling more accurate and robust alignment with diverse query capability demands.

\begin{figure*}[t]
    \centering
    \includegraphics[width=\linewidth]{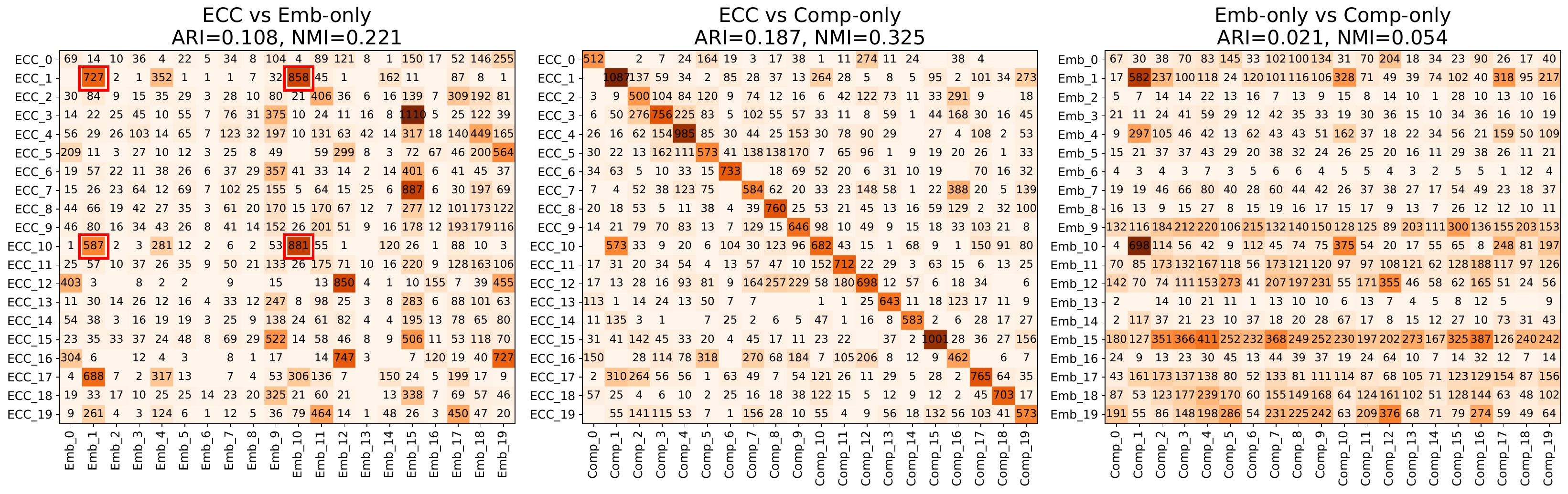}
    \caption{
        Cluster overlap matrix of hard partitions induced by three different clustering methods at $K=20$.
       Each cell shows the number of queries shared by a pair of clusters.
       ARI ($\uparrow$) and NMI ($\uparrow$) quantify overall agreement between the two clusterings.
    }
    \label{fig:matrix}
    \vskip -0.3in
\end{figure*}

\noindent\textbf{RQ3: How does the choice of inference signals affect capability-aware ranking quality?}
Holding the clustering method fixed and varying the inference signal, Figure~\ref{fig:main} shows that incorporating even a single probe comparison at inference time substantially improves capability-aware ranking quality. 
Across methods, the strongest performance is achieved by combining the probe with embeddings (Emb+Probe-infer), which consistently yields higher ranking quality gains than using embeddings alone or the probe alone. 
Table~\ref{tab:main} shows that for \tsf{ECC}, Emb+Probe-infer achieves the largest ranking quality gains, improving over Emb-infer by \textbf{19.06} pp and over Probe-infer by \textbf{5.64} pp.
This suggests that using both embeddings and a probe comparison yields a stronger query-specific signal, leading to more accurate capability inference of LLMs.
In contrast, embedding-only clustering exhibits only limited gains from adding the probe during inference, and in some cases, using the probe alone can even underperform Emb-infer.
This confirms the signal mismatch: clusters formed purely from embeddings are not necessarily aligned with the comparison evidence signals during capability inference.
As a result, probe evidence cannot be effectively exploited under embedding-only clustering, highlighting the importance of incorporating posterior comparison signals during clustering to enable robust and effective inference.

\noindent\textbf{RQ4: How do different clustering methods trade off within-cluster coherence and across-cluster separation in their hard partitions?}
We induce hard partitions by assigning each query to its highest-responsibility cluster and evaluate (i) within-cluster BT profile consistency via 5-fold Kendall-$\tau$, and (ii) cross-cluster separation via inter-cluster BT profile similarity and a cross-cluster ranking quality gap.
Figure~\ref{fig:consistency} shows that methods that incorporate comparison evidence during clustering (comparison-only clustering and \tsf{ECC}) consistently produce more coherent and better-separated partitions across a wide range of $K$: they achieve higher within-cluster consistency, lower inter-cluster similarity, and larger cross-cluster quality gaps.
In contrast, embedding-only clustering yields lower within-cluster consistency and weaker separation, suggesting that (i) clusters are less capability-consistent and tend to group queries with heterogeneous capability demands, resulting in unstable cluster-specific BT profiles, and (ii) queries requiring similar capabilities may be scattered across clusters, causing different clusters to induce similar BT profiles and providing limited capability-specific differentiation across LLMs.
Overall, these results underscore the importance of incorporating posterior comparison evidence during clustering, which helps calibrate the obtained cluster structure to be more coherent within clusters and more separable across clusters, better reflecting the underlying capability demands across queries.
Additional details and results for other benchmarks are provided in \Cref{app:rq5}.
We further analyze how the number of clusters $K$ impacts capability-aware ranking quality in \Cref{app:rq4} and include additional comparisons with P2L in~\Cref{app:p2l}.

\subsection{Cluster Overlap Analysis}
Figure~\ref{fig:matrix} presents the cluster overlap matrices between hard partitions at $K=20$.
Compared to embedding-only clustering, \tsf{ECC} exhibits concentrated overlaps: for most \tsf{ECC} clusters (rows), their queries predominantly fall into a small number of embedding-only clusters (columns).
This indicates that \tsf{ECC} preserves a coarse embedding-only-induced scaffold rather than arbitrarily reshuffling queries (ARI=$0.108$, NMI=$0.221$).
\tsf{ECC} also aligns more closely with comparison-only clustering (ARI=$0.187$, NMI=$0.325$), suggesting that the capability-relevant structure captured by posterior comparisons is retained in \tsf{ECC}.
Embedding-only and comparison-only clusterings show minimal agreement (ARI=$0.021$, NMI=$0.054$) and diffuse overlaps, indicating the misalignment between the two signals.
\tsf{ECC} helps bridge these signals by retaining semantic structure from embeddings while using comparison evidence to refine clusters in a more capability-aware structure. 
We further examine the internal structure of the induced clusters before and after incorporating posterior model-comparison signals.
A notable case appears in Figure~\ref{fig:matrix}: \texttt{ECC\_1}/\texttt{ECC\_10} and \texttt{Emb\_1}/\texttt{Emb\_10} form a prominent $2\times2$ block of four cells with large overlap counts.
This suggests that the two \tsf{ECC} clusters largely draw queries from the two embedding clusters (and vice versa), but with a systematically different partitioning.
To understand whether this repartitioning reflects semantic-topic shifts or deeper capability differences, we use GPT-5.2~\citep{openai2025introducinggpt52} as a standardized summarization aid to generate \textbf{human-verified post-hoc qualitative summaries} for the overlap cells, including semantic topics and implied capability demands.
These post-hoc summaries are used only for interpretation and are not used to define clusters or compute overlaps (full output in~\Cref{app:cluster}).

We first analyze \emph{within-embedding} splits to understand why posterior model comparisons separate queries that are close in embedding space. We fix the embedding cluster and compare the following intersection cells: (\texttt{ECC\_1}$\cap$\texttt{Emb\_1}) vs.\ (\texttt{ECC\_10}$\cap$\texttt{Emb\_1}), and (\texttt{ECC\_1}$\cap$\texttt{Emb\_10}) vs.\ (\texttt{ECC\_10}$\cap$\texttt{Emb\_10}).
Within \texttt{Emb\_1}, both cells fall into an advanced chemistry/biochemistry semantic region centered on mechanistic explanation and structure-property reasoning (e.g., ``mechanism of the reaction between benzene and bromine'') (all example prompts are in~\Cref{app:examples}).
However, posterior comparisons further separate this embedding cluster by {different capability demands}.
Queries in (\texttt{ECC\_1}$\cap$\texttt{Emb\_1}) tend to emphasize {evidence-grounded inference} from analytical or computational signals (e.g., ``using infrared spectroscopy and proton NMR spectroscopy data interpretation'').
In contrast, (\texttt{ECC\_10}$\cap$\texttt{Emb\_1}) focuses on {constraint-aware design} and {procedure planning} on top of mechanistic reasoning, introducing explicit optimization and safety/specificity constraints (e.g., ``specifically targets the cell wall of Gram-negative bacteria'') and requesting a concrete simulation plan (e.g., ``Use Molecular Dynamics simulations to investigate the interactions'').
A similar pattern appears within \texttt{Emb\_10} (see~\Cref{app:b2}), further indicating that within a fixed embedding-defined semantic topic, posterior model comparisons can separate queries with distinct capability demands, yielding a more capability-aligned partition.

We next analyze \emph{cross-embedding} merges to understand why semantically different embedding clusters can be merged under posterior model comparisons.
For a representative case, the two intersection cells within \texttt{ECC\_1} differ in semantics: (\texttt{ECC\_1}$\cap$\texttt{Emb\_1}) is framed in reaction/biochemistry terminology (e.g., ``concentration of the reactants affect the rate of the chemical reaction''), whereas (\texttt{ECC\_1}$\cap$\texttt{Emb\_10}) is framed in materials/device property control (e.g., ``optical and electrical properties of liquid crystals be tuned'').
However, posterior comparisons merge them into the same \texttt{ECC\_1} because both cells have the same capability demand: {parameter-to-outcome causal modeling with simulation-supported prediction}.
This shared capability signature is evidenced by the consistent reliance on computational reasoning in both (e.g., ``calculated using quantum chemistry methods'').
A similar pattern holds for \texttt{ECC\_10}, see~\Cref{app:b2} for details.
These results suggest that embeddings primarily separate queries by surface semantic topic, while posterior comparisons can \emph{merge cells across embeddings} when their capability demands align.

In summary, \tsf{ECC} \emph{splits} embedding-aligned queries when their latent capability demands diverge and \emph{merges} semantically different queries sharing aligned capability demands.
The resulting clusters are more capability coherent, making them better suited for LLM capability inference.
Visualization of the learned clusters under different clustering methods is in~\Cref{app:visual}.

\subsection{Applications and Additional Studies}

\noindent\textbf{Guided Optimal Query Routing.}
We consider an important downstream application of capability-aware ranking: \emph{optimal query routing}, where the goal is to route each query to the model that maximizes response quality.
For each test query, we infer a query-specific capability ranking over LLMs using a randomly sampled probe.
The query is routed to the top-ranked model, and we record its response score.
We report the average response quality over all test queries and compare against three baselines: (i) random routing, (ii) global routing to the top model from a single global BT ranking, and (iii) P2L~\citep{frick2025prompttoleaderboard}, which learns an LLM predictor to directly output query-conditioned LLM leaderboard. 
As shown in Figure~\ref{fig:application} (left), \tsf{ECC} achieves the highest average response quality on SPROUT across a wide range of $K$, outperforming all baselines.
This demonstrates that \tsf{ECC} learns a more capability-relevant clustering structure, which enables better prediction of the best-performing model and leads to improved routing performance. 
Full results are in~\Cref{app:qr}.

\begin{wrapfigure}[12]{R}{0.55\textwidth}
    \vspace{-1.3em}
    \centering
    \includegraphics[width=\linewidth]{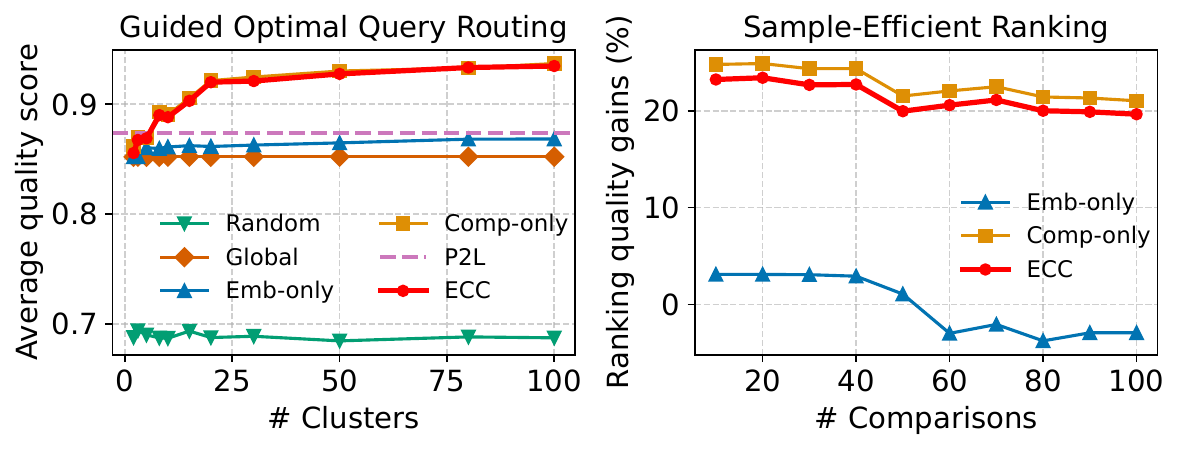}
    \caption{Performance of \tsf{ECC} on two applications: (i) guided optimal query routing (SPROUT) and (ii) sample-efficient new model ranking (RouterBench).}
    \label{fig:application}
\end{wrapfigure}

\noindent\textbf{Sample-Efficient New Model Ranking.}
We study another downstream task: inserting a new model into an existing ranking under a limited comparison budget.
We evaluate {sample efficiency} by varying the number of comparisons involving the new model, with one comparison per selected query.
Using only comparisons among existing models, each clustering method first clusters queries and then uses a fixed budget of new-model comparisons to estimate the ranking of the new model. 
We evaluate on held-out new-model comparisons and report the ranking quality gains relative to the single global BT baseline trained with the same old-model comparisons and the same new-model budget.
Figure~\ref{fig:application} (right) on RouterBench shows that \tsf{ECC} achieves substantial ranking quality gains under small budgets, consistently outperforming embedding-only clustering, and is comparable to comparison-only clustering, demonstrating its effectiveness in leveraging limited comparison evidence for new-model ranking.
Full results are in~\Cref{app:er}.

\noindent\textbf{Ablations, Additional Experiments, and Discussions.}
We ablate the number of model comparisons for clustering, trade-off parameter $\lambda$, clustering-query count, temperature $\T$, probe comparisons, embedding models, and tied comparisons, with details in~\Cref{app:abls}.
We further evaluate out-of-distribution generalization (\Cref{app:ood}) and robustness to different judges (\Cref{app:diff_judges}), and discuss evaluation judges and the evaluation efficiency of \tsf{ECC} in~\Cref{app:discuss}.

\section{Conclusion and Limitations}\label{sec:conclusion}
We present \tsf{ECC}, which calibrates prior semantic embeddings using limited posterior model comparisons to bridge the gap between surface-level semantics and latent capability alignment. 
By characterizing clusters via capability profiles and introducing trainable responsibilities, \tsf{ECC} accommodates queries with mixed capability demands, yielding a flexible, capability-aware clustering structure for {query-specific} LLM capability inference. 
We validate \tsf{ECC} through extensive quantitative and qualitative evaluations, demonstrating significant improvements in ranking quality and confirming its effectiveness in downstream tasks.

Our paper uses text-only queries and does not evaluate performance on multimodal inputs. 
However, \tsf{ECC} makes no assumptions about the embedding modality and can, in principle, accommodate multimodal query embeddings without changing its main clustering and inference procedures.
We also cluster queries using only the standalone text of each query, without considering potential conversational context or historical context. 
Extending our method to context-aware settings is an interesting direction for future work.


\newpage

\bibliographystyle{plainnat}  
\bibliography{src/references} 

\newpage

\appendix
\section{Experimental Details}\label{app:setup}
\noindent\textbf{Benchmarks.} 
We use three widely used benchmarks: 
i) SPROUT~\citep{somerstep2025carrot}, which covers 13 LLMs evaluated on a range of curated prompts;
ii) RouterBench~\citep{hu2024routerbench}, which includes the responses from 11 different LLMs over diverse domain-specific queries; 
iii) Open LLM Leaderboard v2~\citep{fourrier2024open}, collects the queries and corresponding responses from 16 LLMs.
These benchmarks span multiple datasets from diverse sources, including MMLU~\citep{hendrycks2020measuring}, MMLU-Pro~\citep{wang2024mmlu}, and MATH~\citep{hendrycks2021measuring}.
For each benchmark, all model responses are scored by the same powerful LLM-based evaluator to ensure consistency.
We split the queries into clustering and evaluation sets with varying proportions (ranging from 20\% to 90\% for clustering), using a default split of 80\% for clustering and 20\% for evaluation.
For each query, we decompose the full set of response scores into pairwise comparisons between models, using non-tied comparisons by default and considering tie-aware variants in the ablation study.
During clustering, we sample a varying number of comparison pairs (from 2 to 24), with 7 pairs per query used by default.
We vary the number of clusters from 2 to 100, using 30 as the default.
For each test query, unless otherwise specified, we hold out one model comparison as a probe for responsibility inference and evaluate on the remaining comparisons.
In the main setting, query embeddings are generated using \texttt{bge-base-en-v1.5}~\citep{bge_embedding}.
We additionally evaluate \texttt{SFR-Embedding-2\_R}~\citep{SFR-embedding-2} and
\texttt{text-embedding-3-large}~\citep{text-embedding-3-large}.
For both embedding and response generation, we use only the user query rather than the full input with the system prompt, avoiding contamination from additional system instructions and ensuring that embeddings reflect the query semantics.
Unless otherwise specified, we fix $\lambda$ as 3 and $\T$ as 0.1 for \tsf{ECC}.
For convergence, we set the threshold for the average KL divergence between successive responsibilities to $10^{-3}$, and the threshold for centroid stability to $10^{-4}$.

\noindent\textbf{Baselines.}
We consider two baseline clustering methods:
(i) \emph{Human-labeled clustering}, which applies hard cluster assignments based on static, predefined taxonomies from human annotations;
(ii) \emph{Embedding-only clustering}, which uses only the semantic embeddings of queries for clustering and follows the same soft-assignment update style as \tsf{ECC}.
Both baselines represent prior semantic clustering structures.
We additionally consider a variant of \tsf{ECC}, \emph{\textbf{Comparison-only (Comp-only)}}, which follows the same optimization procedure but excludes the embedding signal by setting $\lambda=0$.
For human-labeled clustering, we directly use the ground-truth labels as the cluster assignments for each query.
Since each benchmark aggregates multiple datasets, some of which lack human annotations, we restrict our comparison to representative datasets that are internally consistent, drawn from the same source, and fully labeled.
Specifically, we use three such datasets: (i) MMLU, (ii) MMLU-Pro, and (iii) MATH, all of which contain human-defined labels for every query.
For embedding-only clustering, we iteratively compute responsibilities via a temperature-controlled softmax over the embedding-centroid similarities, and update each centroid as the posterior-weighted mean $\bm c_k \leftarrow \mathrm{norm}\left(\sum_q r_{qk}\bm x_q / \sum_q r_{qk}\right)$. 
Furthermore, we use the same temperature parameter $\T$ as in \tsf{ECC}, and terminate the process when the maximum centroid shift falls below a fixed threshold.
When comparing clustering methods (Emb-only, Comp-only, and \tsf{ECC}), we use the same mixture ranking inference procedure as described in~\Cref{sec:method} for a fair comparison.

Additionally, we consider Prompt-to-Leaderboard (P2L)~\citep{frick2025prompttoleaderboard}, which trains an LLM-based predictor on queries and their pairwise model comparisons to predict query-conditioned BT coefficients and induce per-query leaderboards.
We implement P2L following the training objective and protocol described in~\citep{frick2025prompttoleaderboard}.
For each benchmark, we train \texttt{Qwen2.5-1.5B-Instruct}~\citep{qwen2.5} using 4 NVIDIA H200 GPUs, with a learning rate of $8\times10^{-6}$ and batch size of 4.
We use the same set of training queries as for other baselines, with each query associated with 7 pairwise comparisons, and evaluate on the same held-out queries and comparison pairs.

\noindent\textbf{Inference Signals.}
Given the learned clusters, we consider three inference signals for estimating the LLM capability ranking for a new query:
(i) \emph{Emb-infer}, which only uses the embedding signal of the query;
(ii) \emph{Probe-infer}, which only uses a probe model comparison of the query; 
(iii) \emph{Emb+Probe-infer}, which combines both signals.
We use the Emb+Probe-infer signal as the default inference method, unless otherwise specified.
Notably, all three inference settings follow the same posterior-weighted mixture ranking procedure described in~\Cref{sec:method}, where they differ only in the information used to infer the query-specific responsibilities.

\noindent\textbf{Metrics.}
We use two main metrics to evaluate different clustering methods:
\textbf{(1) Clustering performance.}
We evaluate the effectiveness of a capability clustering structure via its impact on \emph{capability-aware ranking quality} over unseen queries.
Specifically, we report the \emph{ranking quality gains}, defined as the average reduction in per-query BT loss (\Cref{eq:btloss}) on pair-wise model comparisons from unseen queries, relative to a single global BT model trained without clustering.
A larger improvement indicates better alignment between the clustering structure and the underlying capability distribution.
\textbf{(2) Hard partition structure analysis.}
To evaluate the structural properties of the hard cluster partitions (obtained by assigning each query to the cluster with the highest responsibility), we report \emph{Kendall-$\tau$} to measure the consistency among the BT profiles across different clusters, indicating whether the clustering leads to coherent and distinct capability orderings, as well as \emph{Adjusted Rand Index (ARI)} and \emph{Normalized Mutual Information (NMI)} to quantify the agreement between two different query partitions.
We also report the ranking quality gap, as the average absolute difference between per-query BT losses under two rankings evaluated over the same queries.

\section{Additional Experimental Results}

\begin{figure*}[t]
    \centering
    \includegraphics[width=\linewidth]{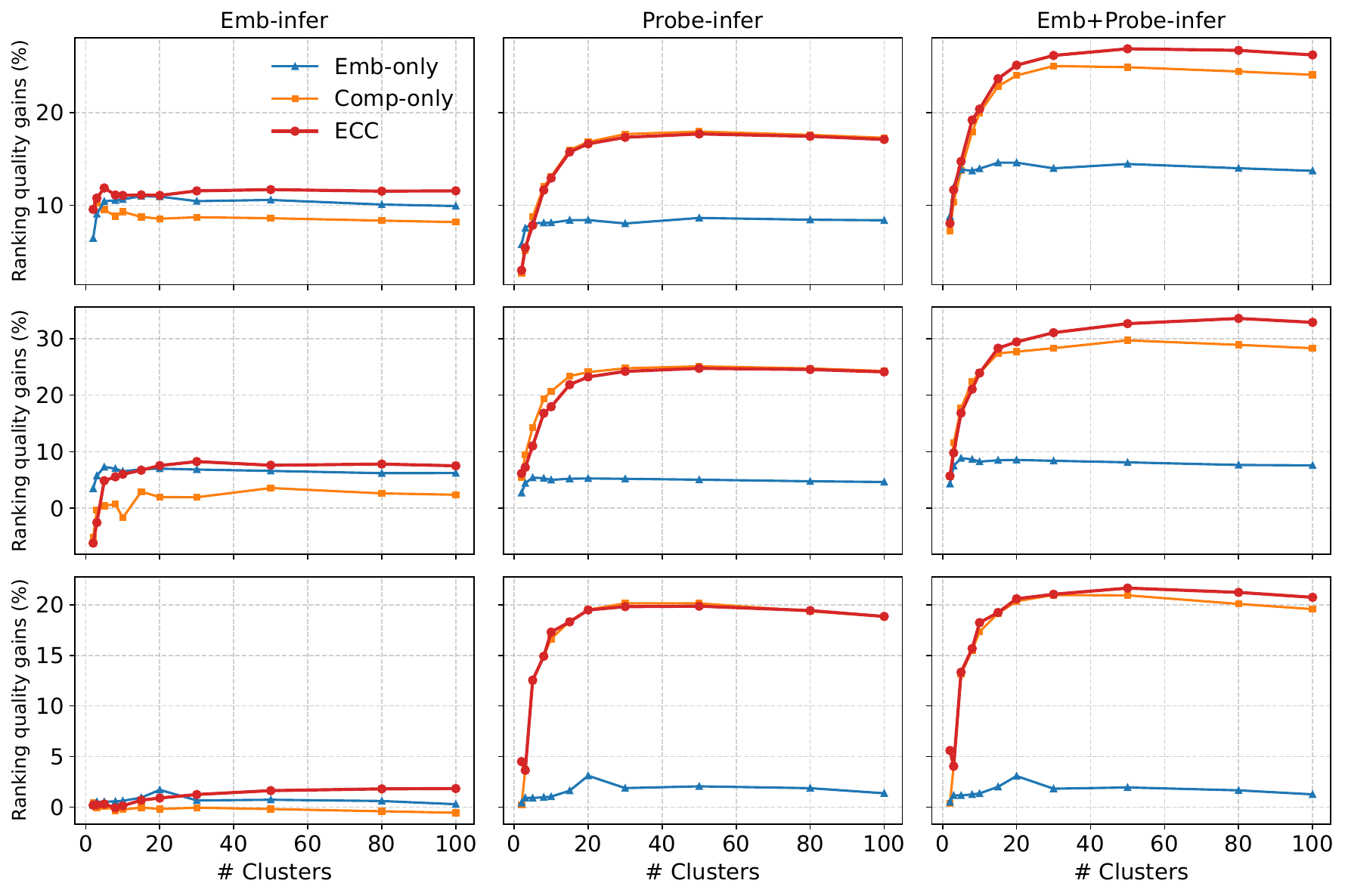}
    \caption{
   Ranking quality gains of clustering methods across all benchmarks, inference signals, and numbers of clusters $K$.
   \tsf{ECC} consistently achieves superior gains over embedding-only clustering.
    }
    \label{fig:main_full}
\end{figure*}

\begin{figure*}[t]
    \centering
    \includegraphics[width=\linewidth]{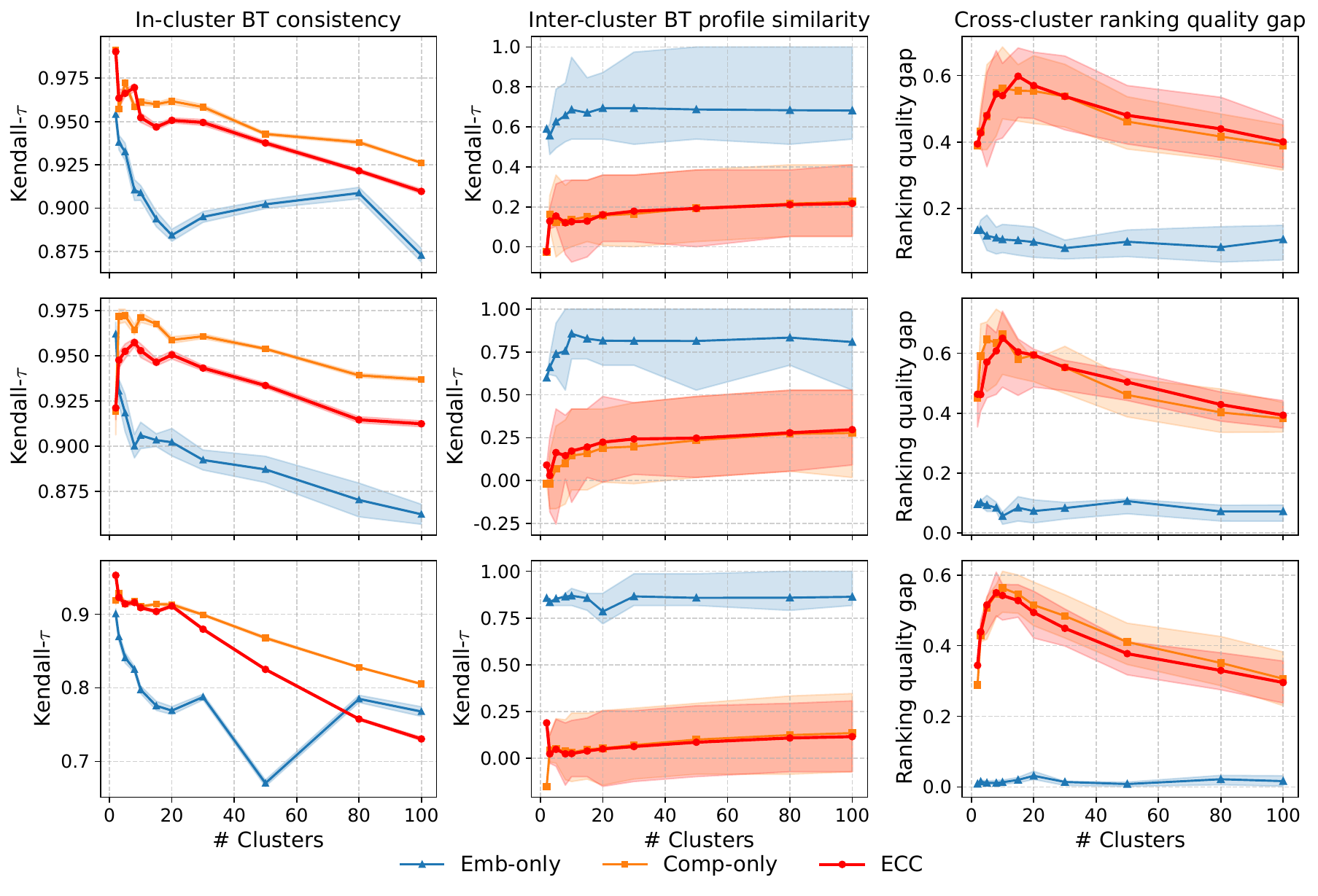}
    \caption{
    Consistency and separation of induced hard partitions across $K$ on three benchmarks.
    Left: in-cluster BT profile consistency ($\uparrow$);
    Middle: inter-cluster BT profile similarity ($\downarrow$);
    Right: cross-cluster ranking quality gap ($\uparrow$).
    Rows correspond to benchmarks: SPROUT (top), RouterBench (middle), and Open LLM Leaderboard v2 (bottom). 
    Methods incorporating comparison signals (\tsf{ECC}, Comp-only) consistently yield higher within-cluster consistency, lower inter-cluster similarity, and larger cross-cluster quality gaps.
    }
    \label{fig:consistency_full}
\end{figure*}

\begin{figure}[t]
    \centering
    \includegraphics[width=0.8\linewidth]{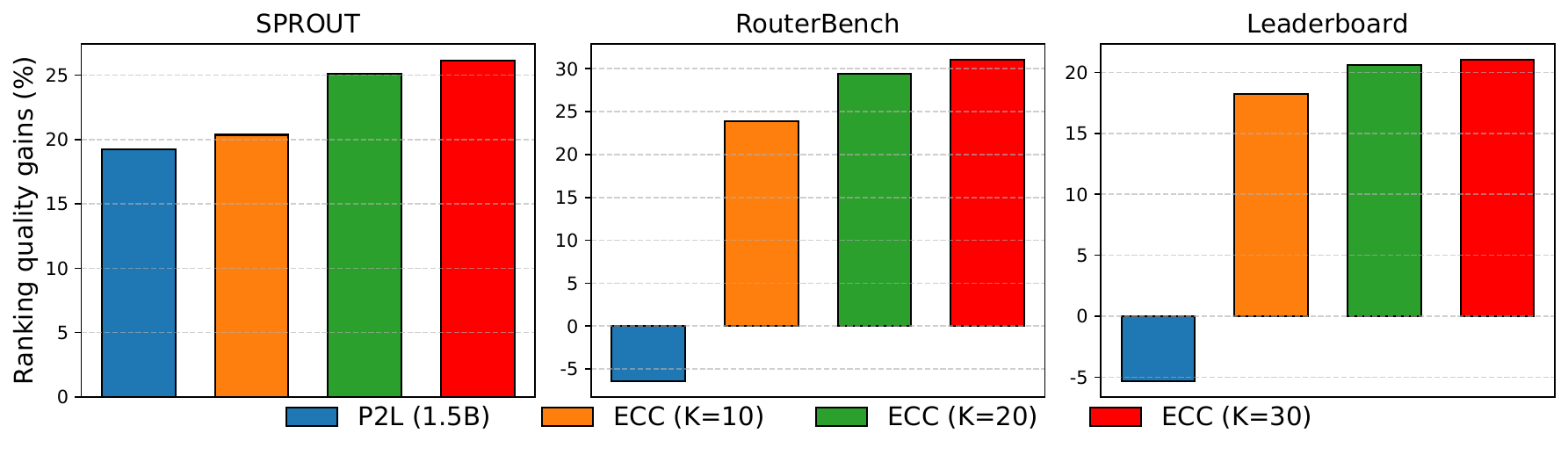}
    \caption{ 
    Ranking quality gains compared with P2L (1.5B) on three benchmarks. 
    \tsf{ECC} consistently outperforms P2L in query-specific ranking quality.
    }
    \label{fig:p2l_main}
\end{figure}

\subsection{RQ4: How do different clustering methods trade off within-cluster consistency and across-cluster separation in their induced hard partitions?}\label{app:rq5}
We first evaluate the within-cluster consistency of the induced hard partitions across clustering methods.
Specifically, for each hard cluster, we randomly split its queries into 5 folds, fit an independent BT model using only the queries in each fold, and measure agreement between the resulting BT profiles (rankings) by averaging pairwise Kendall-$\tau$ over all fold pairs. 
We repeat this procedure 10 times with different random splits and report an overall within-cluster consistency score by averaging across clusters.
As shown in Figure~\ref{fig:consistency_full}, (induced hard) clusters trained with comparison signals (comparison-only clustering and \tsf{ECC}) exhibit consistently higher within-cluster profile consistency across all benchmarks than embedding-only clustering. 
In contrast, embedding-only clustering yields lower within-cluster consistency, suggesting that the resulting clusters are less capability-coherent and tend to mix queries with heterogeneous capability demands, thereby making the cluster-specific BT profiles less stable across query subsets.

To further analyze cluster structure and separability, we compute two cross-cluster measures. 
First, for each $K$, we compute Kendall-$\tau$ between the BT profiles of every pair of hard clusters, and report the mean as the inter-cluster BT profile similarity with the shaded region showing the 25th–75th percentiles of these pairwise similarities.
Second, for each target hard cluster, we compute a cross-cluster ranking quality gap by evaluating queries of that cluster under (i) BT models trained on all other clusters (and averaging the resulting ranking qualities) and (ii) the BT model trained on the same cluster. 
The gap is the difference between these two qualities, with larger values indicating better separation. 
For each $K$, we report the mean gap across clusters, with the shaded region showing the 25th–75th percentiles of the gaps across clusters.
Figure~\ref{fig:consistency} shows that embedding-only clustering produces substantially higher inter-cluster BT profile similarity and smaller ranking quality gaps across $K$, indicating a weak separation, where queries that require similar underlying capabilities may be scattered across multiple clusters, with different clusters end up inducing similar BT profiles, thus providing limited improvement in distinguishing and inferring capability-specific performance across LLMs. 
In contrast, comparison-only clustering and \tsf{ECC} yield both lower inter-cluster similarity and larger ranking quality gaps across all benchmarks, suggesting that the induced partitions are better separated in capability space.

Overall, these results underscore the importance of incorporating posterior model-comparison evidence during clustering, which helps calibrate the obtained cluster structure to be more coherent within clusters and more separable across clusters, better reflecting the underlying capability distribution across queries.

\subsection{RQ5: How does increasing the number of clusters $K$ affect capability-aware ranking quality?}\label{app:rq4}
When varying $K$ from 2 to 100, Figure~\ref{fig:main_full} reveals a consistent scaling trend across benchmarks.
Under Probe-infer and Emb+Probe-infer, ranking quality gains increase rapidly for small $K$ and then saturate, with slight declines at very large $K$. 
In contrast, under Emb-infer, performance is relatively insensitive to $K$, indicating limited benefit from increasing cluster granularity when inference relies only on embeddings.
Across all $K$ and inference signals, \tsf{ECC} remains the best-performing method. 
However, extremely large $K$ leads to a slight performance drop due to noisier cluster-specific BT estimates, as increasing $K$ spreads the responsibilities mass across more clusters, reducing the effective comparisons supporting each cluster.
Overall, \emph{a moderate number of clusters} ($K$=20–50) suffices to capture the key capability variations among queries.

\subsection{Comparison with P2L}\label{app:p2l}
We compare \tsf{ECC} with Prompt-to-Leaderboard (P2L)~\citep{frick2025prompttoleaderboard}, a recent approach that trains an LLM-based predictor to directly output query-conditioned BT coefficients and induce per-query leaderboards from human-judged comparisons.
We implement P2L following the same training objective and process as used in~\citep{frick2025prompttoleaderboard}, using \texttt{Qwen2.5-1.5B-Instruct}~\citep{qwen2.5} on each benchmark.
As shown in Figure~\ref{fig:p2l_main}, \tsf{ECC} consistently achieves higher capability-aware ranking quality gains on unseen queries than P2L, across all three benchmarks and across different numbers of clusters.
Notably, on both RouterBench and Leaderboard, P2L even underperforms the global BT baseline, resulting in negative ranking quality gains.
These results suggest that directly learning an end-to-end query-to-ranking predictor can be sample-inefficient and sensitive to the amount and distribution of training data, especially when surface semantic signals are not well aligned with comparison-derived capability signals.
Under limited supervision, P2L may overfit surface semantic patterns rather than recover the latent capability structure needed for query-specific capability inference.
As a result, its predictions on unseen queries can be unstable and may even underperform the global BT baseline.
In contrast, \tsf{ECC} explicitly learns reusable capability profiles from comparison evidence and performs query-specific inference through lightweight responsibility estimation, enabling more reliable rankings without training an additional LLM.

\begin{figure*}[t]
  \centering
  \begin{subfigure}[t]{0.31\linewidth}
    \centering
    \includegraphics[width=\linewidth]{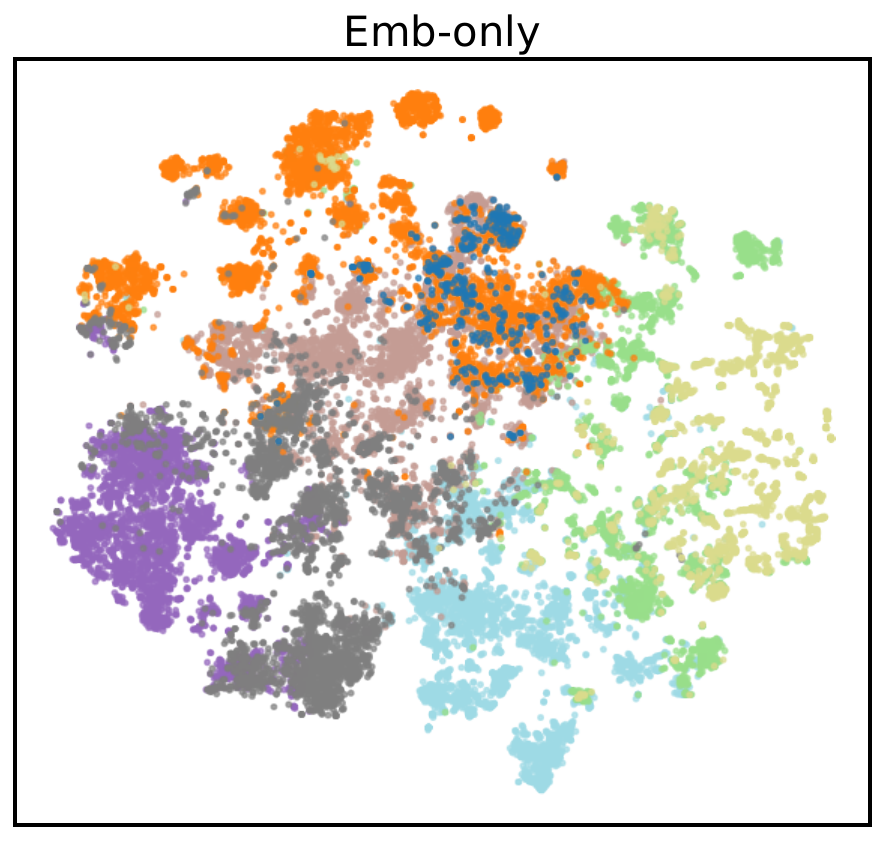}
    \label{fig:visuala}
  \end{subfigure}
  \begin{subfigure}[t]{0.31\linewidth}
    \centering
    \includegraphics[width=\linewidth]{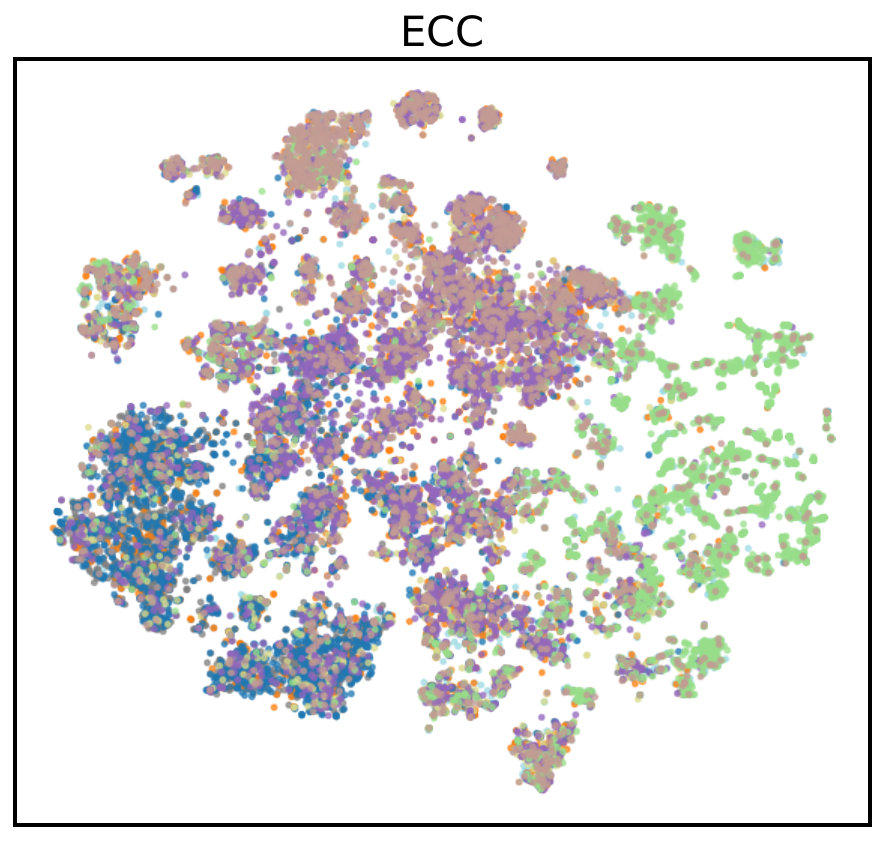}
    \label{fig:visualb}
  \end{subfigure}
  \begin{subfigure}[t]{0.31\linewidth}
    \centering
    \includegraphics[width=\linewidth]{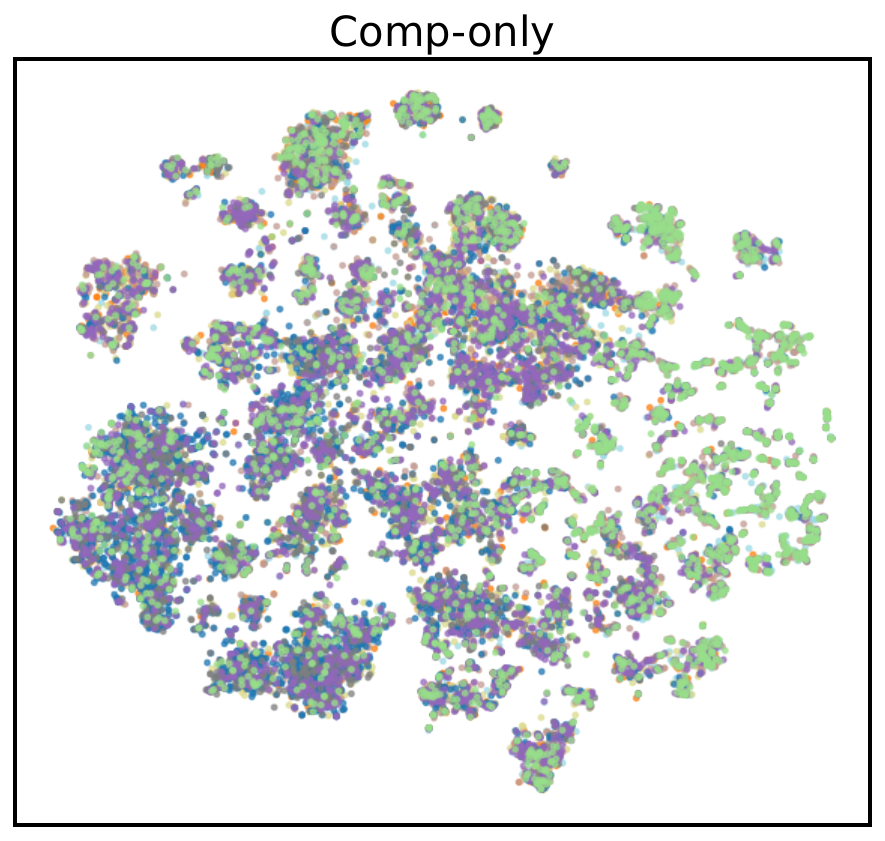}
    \label{fig:visualc}
  \end{subfigure}
  \caption{t-SNE visualization of hard partitions from three clustering methods with $K=8$.}
  \label{fig:visual}
\end{figure*}

\begin{figure}[t]
    \centering
    \includegraphics[width=\linewidth]{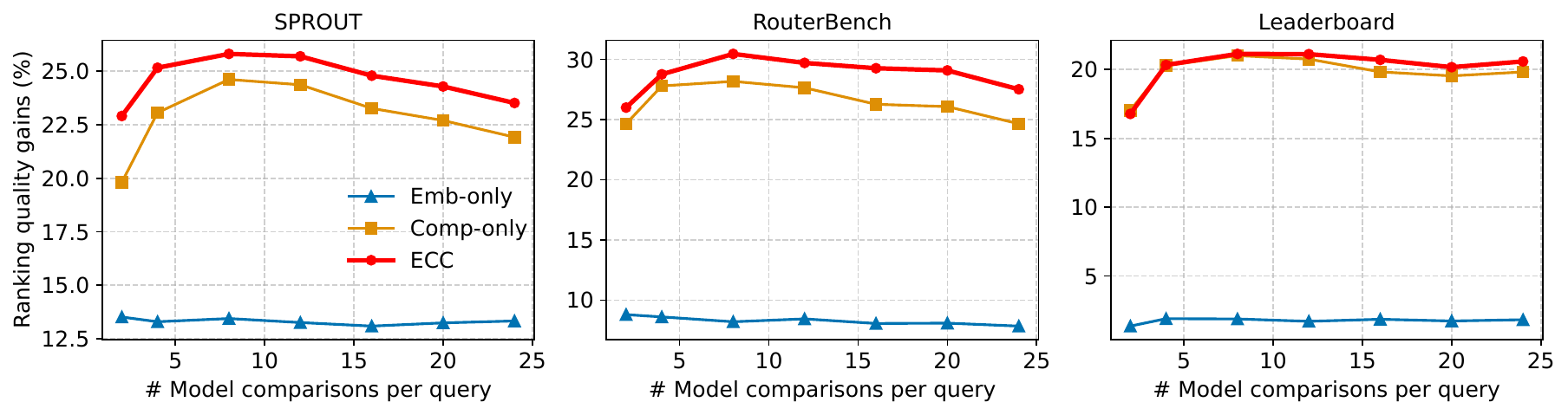}
    \caption{
    Ablation studies on the number of model comparisons per query used during clustering across three evaluation benchmarks.
    }
    \label{fig:abl_pairs}
\end{figure}

\begin{figure}[t]
    \centering
    \includegraphics[width=\linewidth]{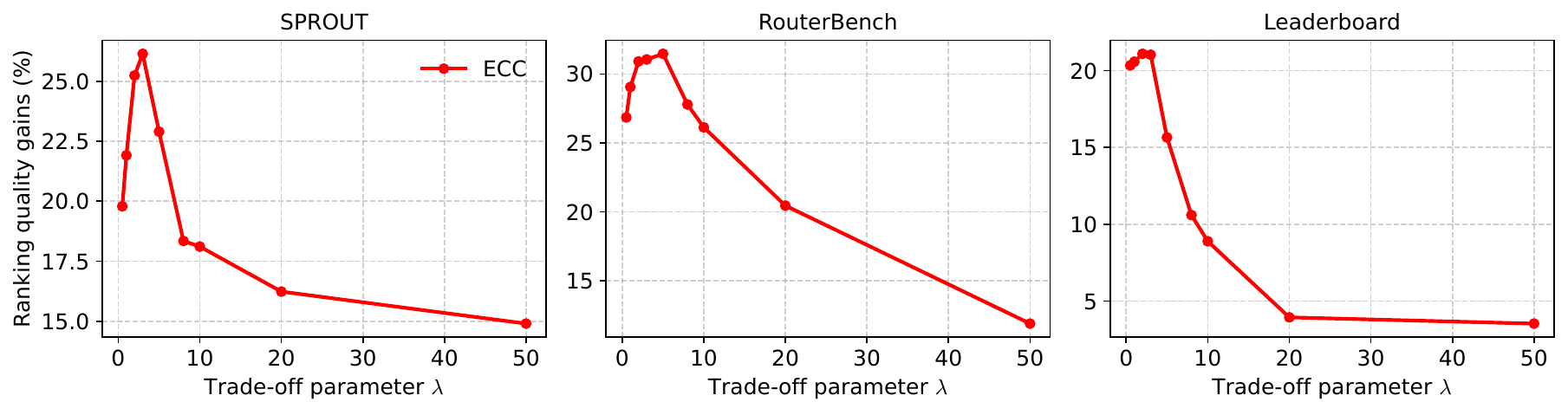}
    \caption{
    Ablation studies on the trade-off parameter $\lambda$ across three evaluation benchmarks.
    }
    \label{fig:abl_lambda}
\end{figure}

\subsection{Cluster Overlap Analysis}\label{app:b2}
Figure~\ref{fig:matrix} presents the pairwise cluster overlap matrices of hard partitions induced by three clustering methods at $K{=}20$.
Compared to embedding-only clustering, \tsf{ECC} exhibits clearly concentrated overlaps: for a majority of \tsf{ECC} clusters (rows), their queries predominantly fall into a small number of embedding-only clusters (columns).
This indicates that \tsf{ECC} preserves a coarse embedding-only-induced scaffold rather than arbitrarily reshuffling queries (ARI=$0.108$, NMI=$0.221$).
Furthermore, \tsf{ECC} aligns more strongly with comparison-only clustering (ARI=$0.187$, NMI=$0.325$), suggesting that the capability-relevant structure captured by posterior comparisons is also retained in \tsf{ECC}.
In contrast, embedding-only and comparison-only clusterings show substantially weaker agreement (ARI=$0.021$, NMI=$0.054$) and more diffuse overlaps, highlighting that embeddings and comparison signals are not naturally aligned and capture different structures.
\tsf{ECC} helps bridge these two views by retaining semantic organization from embeddings while using model comparison evidence to refine clusters in a better capability-aware structure, consistent with the improved ranking quality reported earlier.

We further examine the internal structure of the induced clusters before and after incorporating posterior model-comparison signals.
When comparing \tsf{ECC} to embedding-only clustering, we observe a clear block-wise overlap in Figure~\ref{fig:matrix}: the overlaps between \texttt{ECC\_1}/\texttt{ECC\_10} and \texttt{Emb\_1}/\texttt{Emb\_10} form a prominent $2\times2$ block of four cells, which contains the substantial overlaps between these four clusters across the two partitions.
This indicates that the two \tsf{ECC} clusters largely draw queries from the two embedding clusters (and vice versa), but with a systematically different partitioning.
To understand whether this repartitioning reflects semantic-topic changes or deeper capability differences, we conduct a qualitative comparison of the prompts and their implied capability demands, using GPT-5.2~\citep{openai2025introducinggpt52} as a standardized summarization aid to support consistent comparison, with all summaries human-verified (full analysis in~\Cref{app:cluster}).

We first analyze \emph{within-embedding} splits to understand why posterior model comparisons separate queries that are close in embedding space.
Concretely, we fix the embedding cluster and compare how \tsf{ECC} repartitions it by contrasting the two \tsf{ECC} intersection cells within each embedding cluster: (\texttt{ECC\_1}$\cap$\texttt{Emb\_1}) vs.\ (\texttt{ECC\_10}$\cap$\texttt{Emb\_1}), and (\texttt{ECC\_1}$\cap$\texttt{Emb\_10}) vs.\ (\texttt{ECC\_10}$\cap$\texttt{Emb\_10}).
Within \texttt{Emb\_1}, both cells fall into an advanced chemistry/biochemistry semantic region centered on mechanistic explanation and structure-property reasoning (e.g., ``mechanism of the reaction between benzene and bromine'' and ``mechanism of ligand exchange in metal-organic frameworks (MOFs)'').
However, posterior comparisons further separate this embedding cluster by {different capability demands}.
Queries assigned to (\texttt{ECC\_1}$\cap$\texttt{Emb\_1}) more consistently foreground \emph{evidence-grounded inference} from analytical or computational signals (e.g., ``using infrared spectroscopy and proton NMR spectroscopy data interpretation'' and ``based on Density Functional Theory calculations'').
In contrast, (\texttt{ECC\_10}$\cap$\texttt{Emb\_1}) more often emphasizes \emph{constraint-aware design} and \emph{procedure planning} on top of mechanistic reasoning, introducing explicit optimization and safety/specificity constraints (e.g., ``specifically targets the cell wall of Gram-negative bacteria'' and ``without harming healthy cells in the body'') and requesting a concrete simulation plan (e.g., ``Use Molecular Dynamics simulations to investigate the interactions'').

A similar phenomenon appears within \texttt{Emb\_10}, where both cells lie in a materials/polymer/nanomaterials topic concerned with how controllable design parameters affect functional properties (e.g., ``dopant concentration affect these properties'' and ``optimized to achieve the highest absorption capacity'').
However, the two \tsf{ECC} cells emphasize different capability requirements.
(\texttt{ECC\_1}$\cap$\texttt{Emb\_10}) places a stronger weight on \emph{physics-based prediction and mechanistic sensitivity}, often invoking first-principles or simulation-based reasoning (e.g., ``density functional theory (DFT)'' and ``Using molecular dynamics simulations, investigate'') and focusing on nanoscale structure/property levers (e.g., ``variations in their size, shape, and composition'').
In contrast, (\texttt{ECC\_10}$\cap$\texttt{Emb\_10}) more often emphasizes \emph{process- and application-driven optimization}, focusing on actionable tuning of manufacturing/processing parameters (e.g., ``processing parameters or equipment can be made'') and requests step-by-step characterization procedures (e.g., ``step-by-step explanation of the DMA technique'' and ``determine the molecular weight distribution'').
Taken together, these splits suggest that even under a fixed embedding-defined semantic topic, posterior model comparisons can distinguish prompts that reveal heterogeneous capability requirements and refine the cluster structure by repartitioning the neighborhood into more capability-aligned clusters.

We next analyze \emph{cross-embedding} merges to understand why semantically different embedding clusters can be merged under posterior model comparisons.
For a representative case, the two intersection cells within \texttt{ECC\_1} differ in semantics: (\texttt{ECC\_1}$\cap$\texttt{Emb\_1}) is framed in reaction/biochemistry terminology (e.g., ``concentration of the reactants affect the rate of the chemical reaction''; ``reaction mechanism for the conversion of an alcohol to an alkene''), whereas (\texttt{ECC\_1}$\cap$\texttt{Emb\_10}) is framed in materials/device property control (e.g., ``optical and electrical properties of liquid crystals be tuned''; ``size of a quantum dot affect its electronic and optical properties'').
However, posterior comparisons merge them into the same \texttt{ECC\_1} because both cells instantiate the same capability demand: \emph{parameter-to-outcome causal modeling with simulation-supported prediction}.
This shared capability signature is evidenced by the consistent reliance on computational reasoning in both cells (e.g., ``calculated using quantum chemistry methods'' vs.\ ``density functional theory calculations'' / ``Monte Carlo simulations'').

\begin{figure}[t]
    \centering
    \includegraphics[width=\linewidth]{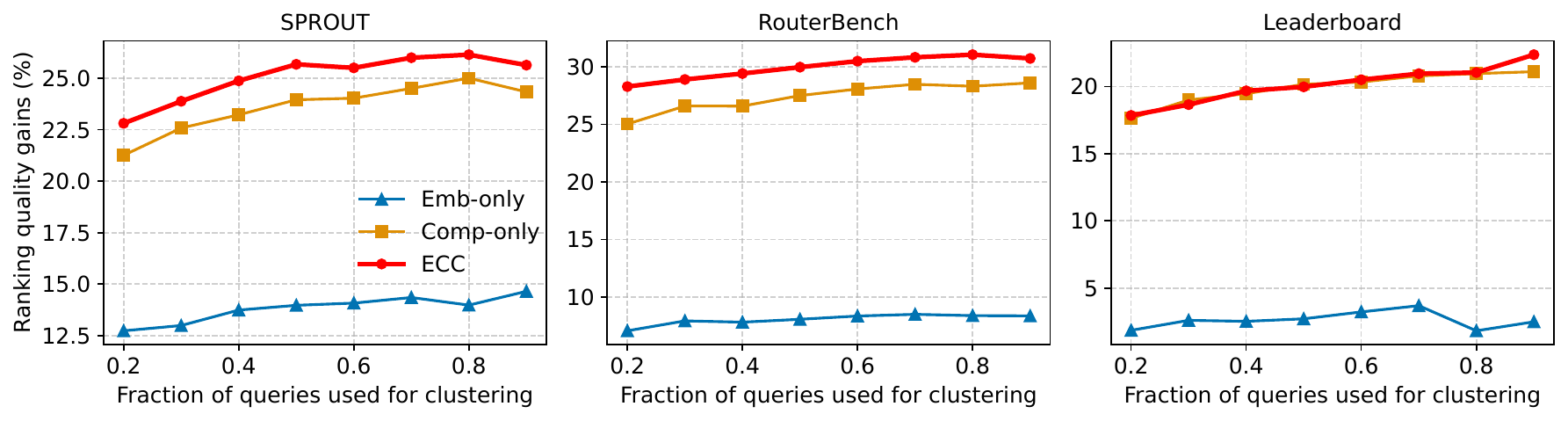}
    \caption{
    Ablation on the number of clustering queries across three benchmarks. 
    }
    \label{fig:abl_query}
\end{figure}

\begin{figure}[t]
    \centering
    \includegraphics[width=\linewidth]{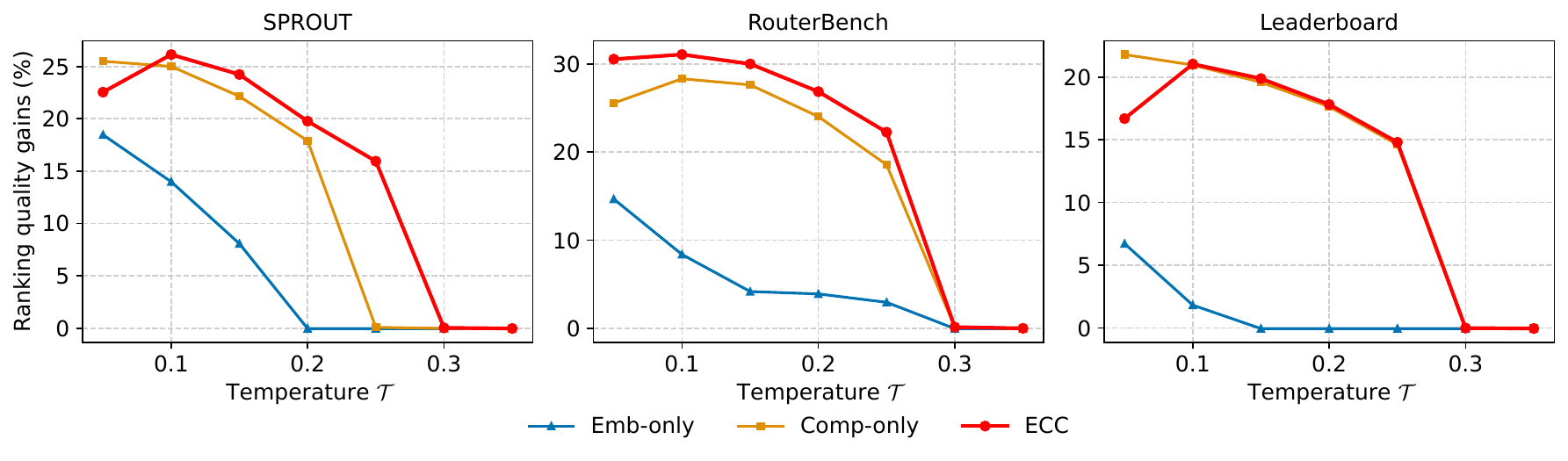}
    \caption{
    Ablation on the temperature $\T$ across three benchmarks. 
    }
    \label{fig:abl_temp}
\end{figure}

A similar pattern holds for \texttt{ECC\_10}: (\texttt{ECC\_10}$\cap$\texttt{Emb\_1}) emphasizes biomedical/chemical decision making under constraints (e.g., ``selectively targets cancer cells while sparing healthy cells''; ``without affecting the related enzyme Cyclooxygenase-1 (COX-1)''), while (\texttt{ECC\_10}$\cap$\texttt{Emb\_10}) emphasizes materials/process optimization (e.g., ``degree of crystallinity of a polymer affect its mechanical properties''; ``optimize the printing conditions for a specific polymer-based material''; ``optimized while maintaining high selectivity'').
Despite the semantic shift, both cells demand an aligned capability, \emph{constrained optimization with model-based justification}, realized via analytical formulations (e.g., ``using a rate equation for the reaction in question'') or simulation workflows (e.g., ``Use molecular dynamics simulations to investigate''; ``Monte Carlo simulation'').
These results suggest that embeddings primarily separate queries by surface semantic topic (\texttt{Emb\_1} vs.\ \texttt{Emb\_10}), while posterior model comparisons can \emph{merge cells across embeddings} when their capability demands align.

Taken together, this $2\times2$ block reveals a consistent pattern: \tsf{ECC} \emph{splits} embedding-aligned queries when their latent capability demands diverge, and \emph{merges} embedding-separated (semantically different) queries when they share aligned capability demands.
The resulting clusters are therefore more capability coherent, making them better suited for capability inference of LLMs.

\subsection{Clustering Visualization}\label{app:visual}
Figure~\ref{fig:visual} visualizes the hard partitions produced by different clustering methods with $K=8$.
Compared to embedding-only clustering, clustering based solely on model comparisons is substantially less aligned with the embedding geometry, resulting in visibly mixed clusters that group together queries distant in embedding space.
In contrast, \tsf{ECC} preserves much of the coarse embedding-based organization while still adjusting boundaries based on model comparison signals.
This view echoes our qualitative analysis and quantitative results: embeddings provide a useful semantic scaffold, while posterior model comparison signals help reshape the clustering structure to be more capability-aware.

\begin{figure}[t]
    \centering
    \includegraphics[width=\linewidth]{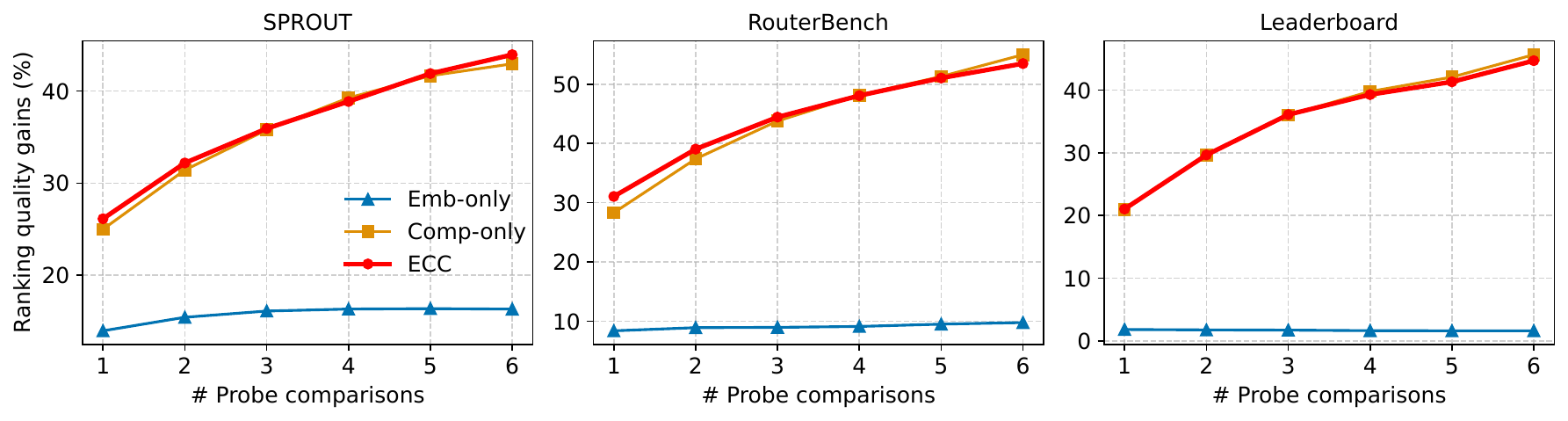}
    \caption{
    Ablation study on the number of probe model comparisons used for evaluation across three benchmarks.
    }
    \label{fig:probe}
\end{figure}

\begin{figure}[t]
    \centering
    \includegraphics[width=\linewidth]{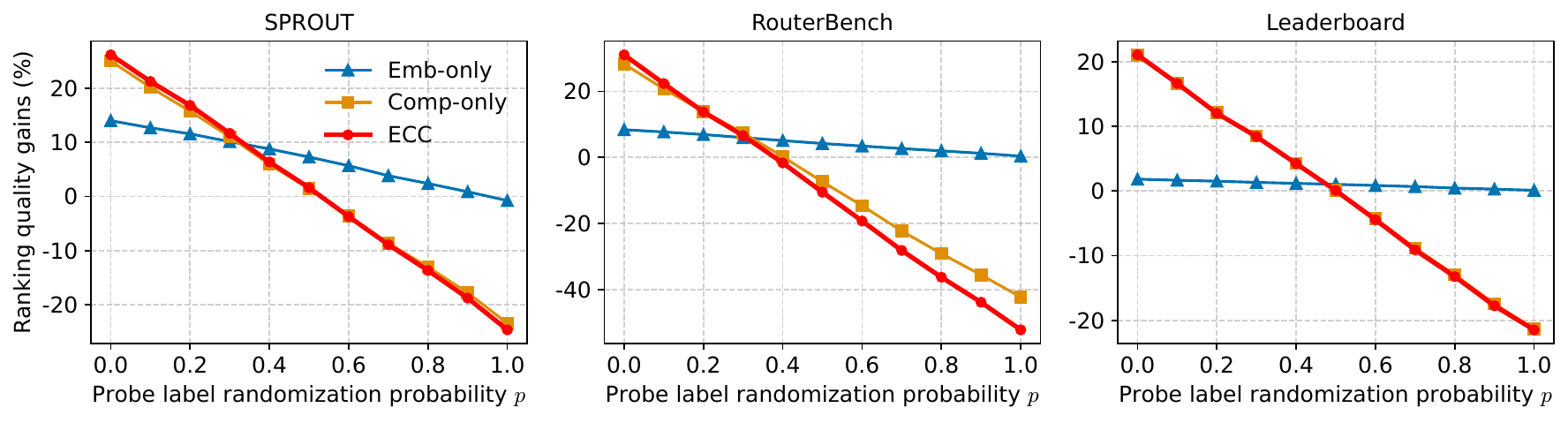}
    \caption{
    Ablation on probe-label randomization across three benchmarks.
    }
    \label{fig:probe_random}
\end{figure}

\begin{table*}[t]
\centering
\small
\caption{
Robustness of \tsf{ECC} to probe selection at inference time across three benchmarks.
We report the mean $\pm$ std of ranking quality gains (\%) over different probe selections for each clustering method.
}
\begin{tabular}{lccc}
\toprule
{\textbf{{Clustering Method}}} 
& {\textbf{SPROUT}} 
& {\textbf{RouterBench}} 
& {\textbf{Open LLM Leaderboard v2}} \\
\midrule
{Emb-only} & 13.99 $\pm$ 0.12 & 8.48 $\pm$ 0.12 & 1.79 $\pm$ 0.05   \\
{Comp-only} & 24.92 $\pm$ 0.25 & 28.21 $\pm$ 0.56 & 20.92 $\pm$ 0.29 \\
\textbf{\tsf{ECC}} & \textbf{26.41 $\pm$ 0.31} & \textbf{30.95 $\pm$ 0.50} & \textbf{21.21 $\pm$ 0.21}   \\
\bottomrule
\end{tabular}
\label{tab:var_probe}
\end{table*}

\begin{figure}[t]
    \centering
    \includegraphics[width=0.7\linewidth]{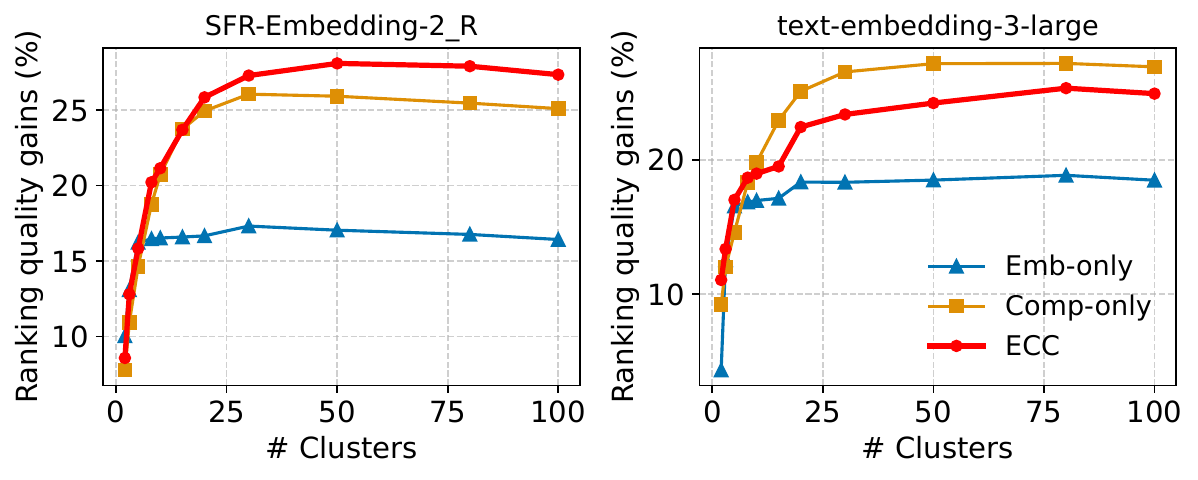}
    \caption{
    Ablation of embedding models on SPROUT.
    }
    \label{fig:emb}
\end{figure}

\begin{figure}[t]
    \centering
    \includegraphics[width=\linewidth]{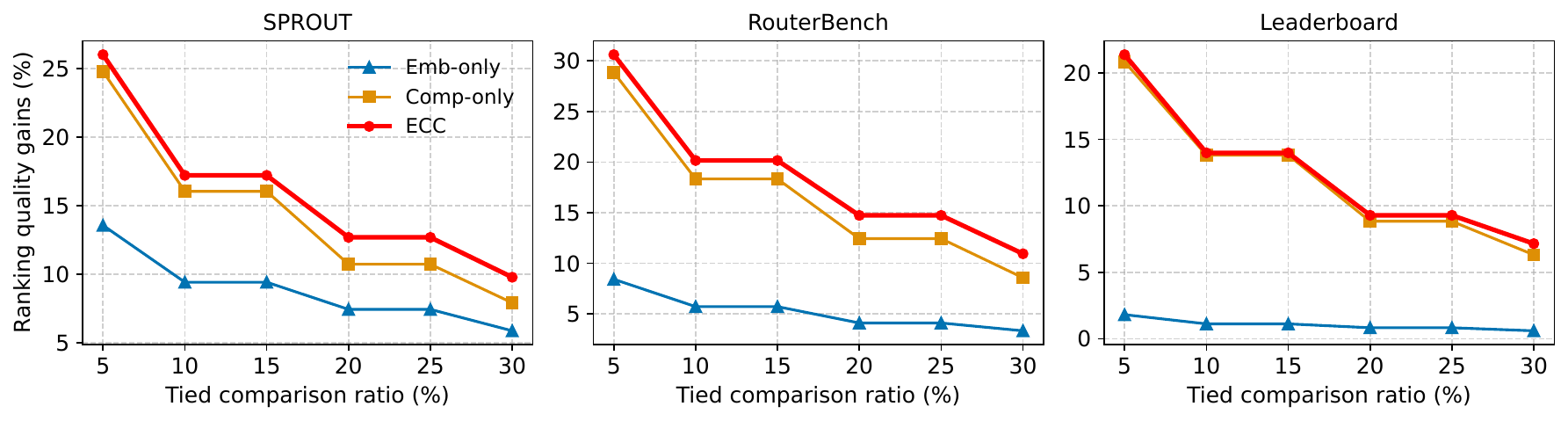}
    \caption{
    Robustness to tied comparisons.
    Across varying tied-comparison ratios, \tsf{ECC} consistently achieves the best or comparable ranking quality gains across benchmarks, showing robustness to tie-aware comparison settings.
    }
    \label{fig:tie}
\end{figure}

\begin{table}[t]
\centering
\small
\caption{Ranking quality gains (\%) of \tsf{ECC} and embedding-only clustering in the out-of-distribution query setting.}
\setlength{\tabcolsep}{5pt}
\begin{tabular}{lcccccccc}
\toprule
\textbf{Method} & \textbf{$K=5$} & \textbf{$K=10$} & \textbf{$K=15$} & \textbf{$K=20$} & \textbf{$K=30$} & \textbf{$K=50$} & \textbf{$K=80$} & \textbf{$K=100$} \\
\midrule
Emb-only & 2.79 & 2.96 & 2.93 & 2.71 & 2.48 & 1.86 & 0.86 & 0.13 \\
\tsf{ECC} & \textbf{8.19} & \textbf{17.41} & \textbf{19.18} & \textbf{19.70} & \textbf{20.77} & \textbf{20.20} & \textbf{17.75} & \textbf{15.94} \\
\bottomrule
\end{tabular}
\label{tab:ood}
\end{table}

\begin{table}[t]
\centering
\small
\caption{Ranking quality gains (\%) of \tsf{ECC} and embedding-only clustering on MMLU-Pro under different judges.}
\setlength{\tabcolsep}{4.4pt}
\scalebox{0.9}
{
\begin{tabular}{lcccccccc}
\toprule
\textbf{Method} & \textbf{$K=5$} & \textbf{$K=10$} & \textbf{$K=15$} & \textbf{$K=20$} & \textbf{$K=30$} & \textbf{$K=50$} & \textbf{$K=80$} & \textbf{$K=100$} \\
\midrule
Emb-only (SPROUT) & 3.16 & 3.65 & 3.48 & 3.30 & 3.36 & 2.75 & 1.99 & 1.54 \\
\tsf{ECC} (SPROUT) & \textbf{15.09} & \textbf{20.96} & \textbf{21.54} & \textbf{21.46} & \textbf{22.01} & \textbf{21.70} & \textbf{19.86} & \textbf{18.69} \\
\midrule
Emb-only (Leaderboard) & 1.63 & 2.07 & 2.19 & 2.05 & 1.61 & 0.78 & -0.57 & -1.45 \\
\tsf{ECC} (Leaderboard) & \textbf{10.31} & \textbf{17.09} & \textbf{18.55} & \textbf{19.14} & \textbf{19.55} & \textbf{18.41} & \textbf{15.78} & \textbf{14.08} \\
\bottomrule
\end{tabular}
}
\label{tab:diff_judges}
\end{table}

\begin{figure}[t]
    \centering
    \includegraphics[width=\linewidth]{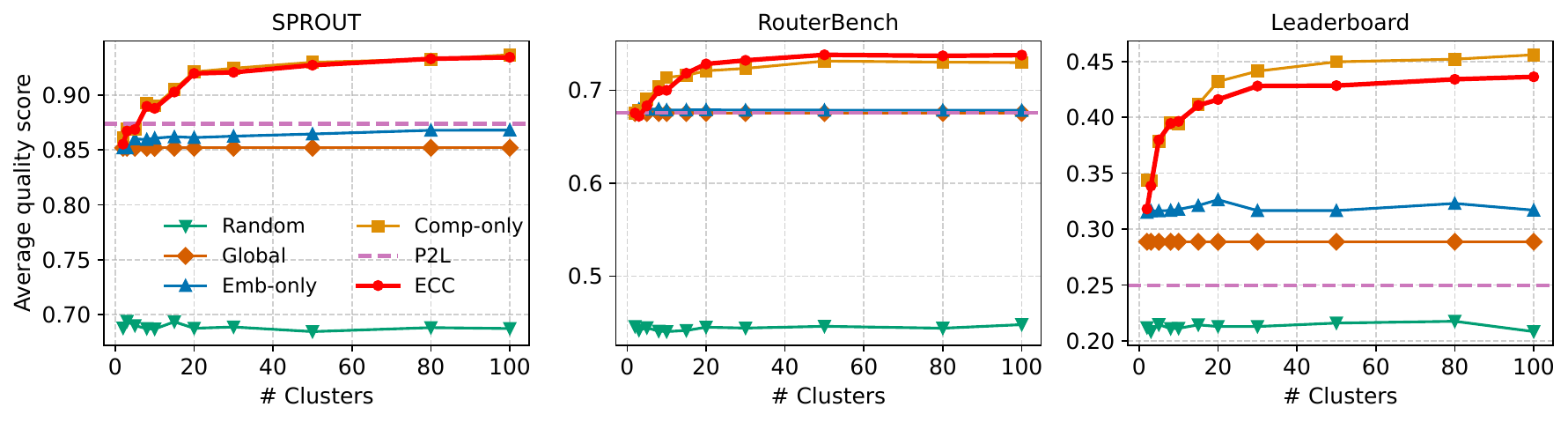}
    \caption{
    Performance of \tsf{ECC} on the downstream task of guided query routing across three benchmarks, where it consistently outperforms baselines by achieving higher response quality per query.
    }
    \label{fig:routing}
\end{figure}

\begin{figure}[t]
    \centering
    \includegraphics[width=\linewidth]{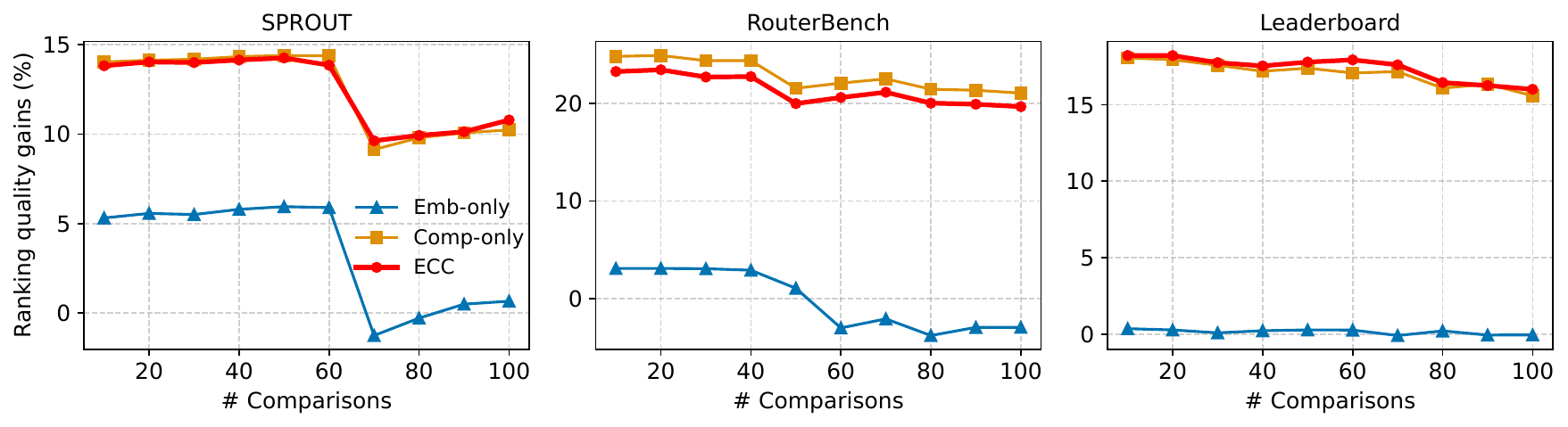}
    \caption{
    Performance of \tsf{ECC} on the downstream task of sample-efficient new model ranking across three benchmarks, where it consistently outperforms embedding-only clustering by achieving higher capability-aware ranking quality gains under the limited comparison budget.
    }
    \label{fig:ranking}
\end{figure}


\subsection{Ablation Studies}\label{app:abls}

\noindent\textbf{Number of Clustering Comparisons.}
To assess the sensitivity of \tsf{ECC}'s performance to the amount of comparison evidence used in clustering, we vary the number of model comparison pairs from 2 to 24 across three benchmarks.
Figure~\ref{fig:abl_pairs} shows that \tsf{ECC}'s ranking quality gains improve rapidly as the number of comparisons increases from very sparse to moderate levels, but then exhibit diminishing returns, and a slight decline at the highest comparison budgets. 
A similar trend is observed for comparison-only clustering.
These results suggest that a small number of comparisons is sufficient to capture most of the benefits of comparison signals, while additional comparisons may provide limited new information and accumulated noise, slightly degrading performance.

\noindent\textbf{Trade-off Parameter $\lambda$.}
We vary the trade-off parameter $\lambda$ from 0.5 to 50 to control the relative weight between embeddings and model comparison evidence during clustering, where a larger $\lambda$ places more weight on embeddings.  
Figure~\ref{fig:abl_lambda} shows a clear trend: performance improves rapidly as $\lambda$ increases from small values, peaks at an intermediate range, and then degrades as $\lambda$ becomes large.
This suggests that embeddings and model comparisons offer complementary information for clustering: over-emphasizing either signal hurts capability-aware ranking quality, while a balanced combination yields the best performance.

\noindent\textbf{Number of Queries Used for Clustering.}
We assess the sensitivity of different methods to the amount of clustering queries by varying the fraction of queries used for clustering from 0.2 to 0.9 across three benchmarks.
Figure~\ref{fig:abl_query} shows a clear trend: as more queries are used for clustering, all methods achieve higher ranking quality gains, with \tsf{ECC} consistently leading.
Even with only 20\% of queries for clustering, \tsf{ECC} achieves strong performance.
Overall, \tsf{ECC} is relatively robust to the number of clustering queries and delivers strong capability-aware rankings even under low-resource settings.

\noindent\textbf{Temperature $\T$.}
We vary $\T$ from 0.05 to 0.35 to assess the performance of different methods across three benchmarks.
Figure~\ref{fig:abl_temp} shows that clustering performance is strong with the low $\T$ regime and then degrades rapidly as $\T$ increases, consistent with over-smoothing that weakens posterior signals.
Notably, comparison-only clustering and \tsf{ECC} maintain high performance over a broader range of small $\T$ values, while embedding-only clustering deteriorates more steadily as $\T$ increases.
These results suggest that selecting an appropriate $\T$ is important for preserving informative responsibilities and achieving strong capability-aware ranking quality.

\noindent\textbf{Number of Probe Comparisons.}
To assess the sensitivity of \tsf{ECC} to the amount of posterior evidence available at inference time, we vary the number of probe comparisons per test query from 1 to 6. 
As shown in Figure~\ref{fig:probe}, increasing the number of probes consistently improves ranking quality gains for \tsf{ECC} and comparison-only clustering, while embedding-only clustering remains stable. 
This is expected: more probe comparisons provide stronger posterior evidence to infer the LLM capabilities for a given query, leading to more accurate query-specific capability-aware rankings. 
In contrast, embedding-only clustering relies solely on embedding signals to form clusters and thus is largely insensitive to the number of probe comparisons during inference.

\noindent\textbf{Probe Label Randomization.}
We further investigate the impact of probe comparisons at inference by corrupting probe comparison outcomes.
Specifically, for each probe comparison, we replace its binary label with a random draw from $\{0,1\}$ with probability $p\in[0,1]$, and keep the original outcome with probability $1-p$.
As shown in Figure~\ref{fig:probe_random}, increasing $p$ consistently degrades the performance of \tsf{ECC} and comparison-only clustering across all three benchmarks, indicating that accurate probe outcomes provide essential query-specific evidence for inferring capability-aware rankings.
In contrast, embedding-only clustering exhibits weaker sensitivity to $p$, consistent with its exclusive reliance on prior semantic information rather than posterior comparison signals.
Notably, \tsf{ECC} remains strong performance under mild corruption (e.g., $p\le 0.2$), demonstrating a degree of tolerance to occasional noise in probe outcomes.

\noindent\textbf{Probe Selection Robustness.}
We further evaluate the robustness of \tsf{ECC} to the choice of probe comparisons at inference time.
For each query, we randomly select different probe comparisons from the available comparison set and repeat this process seven times.
We report the mean and standard deviation of ranking quality gains across probe selections for each clustering method.
As shown in Table~\ref{tab:var_probe}, \tsf{ECC} consistently achieves the highest ranking quality gains across all three benchmarks, outperforming all baselines while exhibiting low variance across probe selections.
These results demonstrate that \tsf{ECC} has strong robustness to probe choice, indicating stable and reliable capability inference under varying probe selections.

\noindent\textbf{Embedding Models.}
We evaluate the sensitivity of \tsf{ECC} to the choice of embedding model by replacing the default encoder with \texttt{SFR-Embedding-2\_R} and \texttt{text-embedding-3-large}.
As shown in Figure~\ref{fig:emb} (SPROUT), \tsf{ECC} consistently outperforms the embedding-only clustering across varying numbers of clusters under both embedding choices.
These results suggest that \tsf{ECC} is robust to the embedding backbone and maintains strong performance across different encoders.

\noindent\textbf{Robustness to Tied Comparisons.}
To assess the robustness of \tsf{ECC} to tied comparisons, we extend the BT objective to allow soft pairwise targets, encoding wins, losses, and ties as $y=1$, $y=0$, and $y=0.5$, respectively.
Specifically, for each query, we vary the fraction of tied comparisons among all model comparisons from $5\%$ to $30\%$ and compare \tsf{ECC} with embedding-only and comparison-only clustering.
As shown in Figure~\ref{fig:tie}, \tsf{ECC} consistently achieves the best ranking quality gains across benchmarks and tie ratios.
These results show that \tsf{ECC} maintains its advantage over baselines across tied-comparison ratios, demonstrating its robustness to tie-aware comparison settings.

\subsection{Out-of-Distribution Generalization}\label{app:ood}
To evaluate \tsf{ECC} in an out-of-distribution (OOD) setting, we conduct a cross-dataset experiment using MMLU-Pro as the clustering set and MATH as the test-query set.
We compare \tsf{ECC} with embedding-only clustering and report ranking quality gains in~\Cref{tab:ood}.
The results show that \tsf{ECC} substantially outperforms the baseline under this distribution shift, demonstrating its robustness to OOD queries.

\subsection{Robustness to Different Judges}\label{app:diff_judges}
To assess robustness to different evaluation judges, we use MMLU-Pro as a shared task because it appears in both SPROUT and Open LLM Leaderboard v2, which use different judges to score model responses.
We then evaluate whether \tsf{ECC}'s gains remain consistent under these different judge settings.
As shown in~\Cref{tab:diff_judges}, \tsf{ECC} consistently outperforms embedding-only clustering across judges, demonstrating that its gains are robust to the choice of evaluator.

\subsection{Guided Optimal Query Routing}\label{app:qr}
We consider an important downstream application of capability-aware ranking: \emph{optimal query routing}, where the goal is to route each query to the model that yields the highest response quality. 
For each clustering method, we construct query clusters with the clustering queries.
During evaluation, for each test query, we randomly sample a single probe comparison (see \Cref{app:dis_cost} for a discussion of probe cost) and combine it with the query embedding to infer a query-specific capability-aware ranking.
The query is then routed to the top-ranked model under this inferred ranking, and we record its response score.
We report the average response quality across all test queries and compare against three baselines:
(i) Random routing, which routes each query to a randomly selected model,
(ii) Global routing, which routes every query to the top model according to the single global BT ranking, and
(iii) P2L~\citep{frick2025prompttoleaderboard}, which learns a black-box LLM predictor to directly output query-conditioned LLM leaderboard.
As shown in Figure~\ref{fig:routing}, \tsf{ECC} consistently achieves the highest average response quality across all three benchmarks and a wide range of $K$, outperforming all baselines.
Specifically, with $K=30$, \tsf{ECC} improves the response quality of routed queries by \textbf{16.6\%} on average over embedding-only clustering across three benchmarks.
This demonstrates that \tsf{ECC} learns a more capability-relevant clustering structure that enables better prediction of the best-performing model for a given query, even with only a single probe comparison, leading to improved routing performance.

\subsection{Sample-Efficient New Model Ranking}\label{app:er}
One important downstream application is to efficiently place a newly arrived model into an existing model ranking under a limited comparison budget.
We evaluate sample efficiency by measuring how accurately each clustering method ranks a held-out new model as a function of the number of queried comparisons (one new-model comparison per selected query).
Concretely, each clustering method first learns query clusters using only pairwise comparisons among the existing (old) models.
Given the learned clusters, we actively acquire comparisons involving the new model: for each selected query, we use its cluster responsibilities to select the median-ranked old model under the corresponding capability-aware ranking and query the pairwise comparison between the new model and this opponent.
After collecting a fixed budget of such comparisons, we keep the clusters learned from old-model comparisons fixed and update only the BT models by incorporating the acquired new-model comparisons.
We then evaluate on held-out new-model comparisons, and report the ranking quality gains relative to a single global BT model trained with the same old-model comparisons and the same new-model comparison budget but without clustering.

We vary the new-model comparison budget from 10 to 100 and treat each model in turn as the new model, while limiting the number of comparisons among the old models to at most 3 per query.
We report the median ranking quality gains over new-model runs on three benchmarks.
As shown in Figure~\ref{fig:ranking}, \tsf{ECC} achieves substantial ranking quality gains under small budgets, consistently outperforming embedding-only clustering, and is comparable to comparison-only clustering, demonstrating its superior effectiveness in leveraging limited comparison evidence for new-model ranking.
Notably, ranking quality gains remain relatively stable as the budget increases across all methods.
With a 100-comparison budget, \tsf{ECC} achieves \textbf{16.3} pp higher new-model ranking quality than embedding-only clustering on average.
This is expected: additional comparisons benefit both the global BT model and clustering methods, thus the relative advantage persists.

\section{Theoretical Proofs}\label{app:proof}
\begingroup
\renewcommand{\thetheorem}{3.1} 
\begin{restatable}[Convexity]{lemma}{Lemm}
\label{lem:convexity}
Consider updating $\bm{\theta}^{(k)}$ for cluster $k$ by minimizing the weighted BT loss $\ell({\bm{\theta}^{(k)}}) = \sum_{q \in \Q} r_{qk} \ell_{\text{comp}}(q; \bm{\theta}^{(k)})$.
Then $\ell({\bm{\theta}^{(k)}})$ is convex in $\bm{\theta}^{(k)}$.
\end{restatable}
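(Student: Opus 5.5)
The plan is to reduce convexity of the weighted loss to convexity of a single scalar term, and then use that nonnegative combinations of convex functions are convex. Write $\ell(\bm{\theta}^{(k)}) = \sum_{q\in\Q} \frac{r_{qk}}{|\N_q|}\sum_{(i,j,y)\in\N_q} -\log P(y\mid i,j;\bm{\theta}^{(k)})$. The weights satisfy $r_{qk}\ge 0$, because they come out of a softmax or a probability-simplex initialization, and $1/|\N_q|>0$. It therefore suffices to show that each per-comparison term
\[
f_{ij,y}(\bm{\theta}) = -y\log\sigma(\theta_j-\theta_i) - (1-y)\log\sigma(\theta_i-\theta_j)
\]
is convex in $\bm{\theta}\in\mathbb{R}^M$. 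Once that holds, the finite nonnegative combination of convex functions is convex, and the lemma follows.

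For each term, note that $f_{ij,y}(\bm{\theta}) = g(\bm a^\top\bm{\theta})$, where $\bm a = \bm e_j - \bm e_i$ and
\[
g(z) = -y\log\sigma(z) - (1-y)\log\sigma(-z) = y\log(1+e^{-z}) + (1-y)\log(1+e^{z}).
\]
A convex function composed with an affine map is convex, so it remains to show that $g$ is convex on $\mathbb{R}$. The functions $z\mapsto\log(1+e^{z})$ and $z\mapsto\log(1+e^{-z})$ both have second derivative $\sigma(z)\sigma(-z)=\sigma(z)(1-\sigma(z))>0$. Hence $g''(z)=\sigma(z)(1-\sigma(z))\ge 0$ for every $y\in[0,1]$.

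This argument also covers the soft tie labels $y=0.5$ from the ablation, since all it needs is $y,1-y\ge 0$. If one prefers an explicit Hessian, the same computation gives
\[
\nabla^2 f_{ij,y}(\bm{\theta}) = \sigma(\theta_j-\theta_i)\bigl(1-\sigma(\theta_j-\theta_i)\bigr)\,(\bm e_j-\bm e_i)(\bm e_j-\bm e_i)^\top \succeq 0 .
\]
Summing these with nonnegative weights gives a weighted graph-Laplacian-type PSD Hessian for $\ell(\bm{\theta}^{(k)})$.

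There is no real obstacle here. The only point needing care is that the weights are nonnegative, which is guaranteed by the responsibility constraints. The lemma claims only convexity, not strict convexity: the loss is invariant under $\bm{\theta}\mapsto\bm{\theta}+c\mathbf{1}$, so it is never strictly convex. I will remark that minimizers are unique only up to this shift, or after imposing a normalization such as $\sum_i\theta_i=0$ together with connectivity of the comparison graph. This matters for the subsequent use of the $\argmin$ in the proof of \Cref{thm:convergence}.
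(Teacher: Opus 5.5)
Your proof is correct and follows the paper's argument: reduce to the per-comparison term $\psi_y(\theta_j-\theta_i)$, use that convexity is preserved under composition with an affine map, and sum with the nonnegative weights $r_{qk}/|\N_q|$. The only difference is cosmetic: you certify convexity of $\log(1+e^{\pm z})$ through the second derivative $\sigma(z)(1-\sigma(z))\ge 0$, whereas the paper notes that these are log-sum-exp functions; your Hessian formula and your shift-invariance remark are correct additions.
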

\endgroup
\begin{proof}
    With the fixed observed comparison outcome $y$, according to~\Cref{eq:btloss}, we have
    \begin{equation*}
    \begin{aligned}
        \ell({\bm{\theta}^{(k)}}) & = \sum_{q \in \Q} r_{qk} \ell_{\text{comp}}(q; \bm{\theta}^{(k)}) \\
        & = \sum_{q \in \Q} \frac{r_{qk}}{|\N_q|} \sum_{(i, j, y) \in\N_q} -\log P(y\mid i,j;\bm{\theta}^{(k)}) \\
        & = \sum_{q \in \Q} \frac{r_{qk}}{|\N_q|} \sum_{(i, j, y) \in\N_q} -\left[y \log \sigma(\theta_j^{(k)} - \theta_i^{(k)}) + (1-y) \log \sigma(\theta_i^{(k)} - \theta_j^{(k)}) \right] 
    \end{aligned}
    \end{equation*}
    Let $t=\theta_j^{(k)}-\theta_i^{(k)}$ and define
    \begin{equation*}
        \psi_y(t) \triangleq y\log(1+e^{-t}) + (1-y)\log(1+e^{t}).
    \end{equation*}
    With $-\log\sigma(t)=\log(1+e^{-t})$ and $-\log\sigma(-t)=\log(1+e^{t})$, the negative log-likelihood can be written as
    \begin{equation*}
    \phi_{ij,y}(\bm{\theta}^{(k)}) = \psi_y\!\left(\theta_j^{(k)}-\theta_i^{(k)}\right)=\psi_y(t).
    \end{equation*}
    Both $\log(1+e^{t})$ and $\log(1+e^{-t})$ are \emph{log-sum-exp} functions and hence convex in $t$. Since $\psi_y(t)$ is a nonnegative weighted sum of these two convex functions for $y\in[0,1]$, $\psi_y$ is convex in $t$. 
    Furthermore, because $t=\theta_j^{(k)}-\theta_i^{(k)}$ is an affine function of $\bm{\theta}^{(k)}$, the composition rule implies that $\phi_{ij,y}(\bm{\theta}^{(k)})$ is convex in $\bm{\theta}^{(k)}$.

    Finally, $\ell(\bm{\theta}^{(k)})$ is a nonnegative weighted sum of convex terms, with weights $\frac{r_{qk}}{|\N_q|}\ge 0$, and is therefore convex in $\bm{\theta}^{(k)}$.
\end{proof}

\MainTheorem*
\begin{proof}
The \tsf{ECC} algorithm operates as a coordinate descent on the objective $\cL(\cP, \Theta)$, alternating between optimizing $\cP$ and $\Theta$.

\noindent\textbf{Optimize $\cP$.} 
In iteration $t$, we fix parameters $\bm \Theta^{(t)}$ and minimize~Objective~\ref{eq:objective} with respect to $\cP$ under the constraint $\sum_k r_{qk} = 1$. 
For each $q\in\Q$, the objective is:
\begin{equation}
    \min_{\bm r_q} \quad \sum_{k=1}^K r_{qk} \ell(q,k;\bm \Theta)^{(t)} - \T H(\bm{r}_q).
\end{equation}
Using $-\T H(\bm r_q)=\T\sum_{k=1}^K r_{qk}\log r_{qk}$, the subproblem becomes
\begin{equation}
\min_{\bm r_q} \sum_{k=1}^K r_{qk} \ell(q,k;\bm \Theta)^{(t)} + \T \sum_{k=1}^K r_{qk}\log r_{qk}.
\label{eq:subproblem}
\end{equation}
Since $\bm \Theta$ is fixed, the losses $\ell(q,k;\bm \Theta)^{(t)}$ are constants with respect to $\bm r_q$. 
Thus, the first term in Objective~\ref{eq:subproblem} is linear in $\bm r_q$, and the entropy term is convex for $\T>0$. 
Therefore, Objective~\ref{eq:subproblem} is convex in $\bm r_q$ and admits a unique minimizer. 
We form the Lagrangian with Lagrange multiplier $\alpha$ for the constraint $\sum_{k=1}^K r_{qk}=1$:

\begin{equation*}
\J(\bm r_q,\alpha) =\sum_{k=1}^K r_{qk}\ell(q,k;\bm \Theta)^{(t)} + \T\sum_{k=1}^K r_{qk} \log r_{qk} + \alpha\left(\sum_{k=1}^K r_{qk}-1\right).
\end{equation*}
Taking the derivative with respect to $r_{qk}$ and setting to 0 gives
\begin{equation*}
\frac{\partial\J}{\partial r_{qk}} = \ell(q,k;\bm \Theta)^{(t)} + \T(1+\log r_{qk}) + \alpha = 0 \Longrightarrow r_{qk}=\exp \left(-\frac{\ell(q,k;\bm \Theta)^{(t)}+\alpha}{\T}-1\right).
\end{equation*}
Normalizing with $\sum_k r_{qk}=1$ yields the closed-form Boltzmann solution:
\begin{equation*}
    r_{qk}^{(t+1)} = \frac{\exp(-\ell(q,k;\bm \Theta)^{(t)} / \T)}{\sum_{j=1}^K \exp(-\ell(q, j;\bm \Theta)^{(t)} / \T)}.
\end{equation*}
This is exactly the responsibility update used in~\Cref{alg:ecc}. 
Since this is the analytical solution to the minimization problem, we have $\cL(\cP^{(t+1)}, \bm \Theta^{(t)}) \leq \cL(\cP^{(t)}, \bm\Theta^{(t)})$.

\noindent\textbf{Optimize $\Theta$.}
We fix responsibilities $\cP^{(t+1)}$ and minimize~Objective~\ref{eq:objective} with respect to $\bm \Theta$. 
Since the entropy term is constant with respect to $\bm \Theta$, the problem decouples into independent optimizations for each cluster $k$:
\begin{itemize}
    \item \textbf{Update $\bm c_k$.}
    With $\cP^{(t+1)}$ and $\bm x_q$ fixed, we minimize with respect to $\bm c_k$:
    \begin{equation*}
    \min_{\|\bm c_k\|_2=1} -\sum_{q\in\Q} r_{qk}^{(t+1)} \bm x_q^\top \bm c_k.
    \end{equation*}
    Let $\bm v_k \triangleq \sum_{q\in\Q} r_{qk}^{(t+1)} \bm x_q$.
    The problem is equivalent to
    \begin{align}
    \max_{\|\bm c_k\|_2=1} \bm v_k^\top \bm c_k.
    \end{align}
    By the Cauchy-Schwarz inequality, $\bm v_k^\top \bm c_k \leq \|\bm v_k\|_2 \|\bm c_k\|_2 = \|\bm v_k\|_2$,
    and equality holds when $\bm c_k$ is aligned with $\bm v_k$.
    Therefore, with $\bm v_k \neq \bm 0$, the global minimizer has the closed form
    \begin{equation*}
    \bm c_k^{(t+1)} = \bm v_k / \|\bm v_k\|_2,
    \end{equation*}
    which is the normalized weighted mean of $\{\bm x_q\}$ under weights $\{r_{qk}^{(t+1)}\}$.
    
    \item \textbf{Update $\bm\theta^{(k)}$.}
    With $\cP^{(t+1)}$ fixed, we optimize only $\bm\theta^{(k)}$ for cluster $k$:
    \begin{equation*}
    \min_{\bm\theta^{(k)}}\; \sum_{q\in\Q} r_{qk}^{(t+1)} \ell_{\text{comp}}(q;\bm\theta^{(k)}).
    \end{equation*}
    By Lemma~\ref{lem:convexity}, this subproblem is convex in $\bm\theta^{(k)}$, thus any gradient-based optimization finds the global minimum.
\end{itemize}
Thus, we achieve $\cL(\cP^{(t+1)}, \bm\Theta^{(t+1)}) \leq \cL(\cP^{(t+1)}, \bm\Theta^{(t)})$.

Combining the two inequalities gives a monotone decrease:
\begin{equation}
    \cL(\cP^{(t+1)},\bm\Theta^{(t+1)}) \leq \cL(\cP^{(t)},\bm\Theta^{(t)}).
\end{equation}

\noindent\textbf{Boundedness.}
Since $0\le H(\bm{r}_q)\le \log K$, $\|\bm{x}_q\|_2=\|\bm{c}_k\|_2=1$ implies $\bm{x}_q^\top \bm{c}_k\leq 1$, and $\ell_{\text{comp}}(q;\bm{\theta}^{(k)})\geq 0$, we obtain the finite lower bound
\begin{equation}
    \begin{aligned}
        \cL(\cP,\bm\Theta) & \geq
        \sum_{q\in\Q} \sum_{k=1}^K r_{qk} \cdot \lambda\cdot (-1)
        -\T\sum_{q\in\Q} \log K \\
        & = -\lambda |\Q| - \T |\Q|\log K.
    \end{aligned}
\end{equation}
Hence $\{\cL^{(t)}\}$ is non-increasing and bounded below, so it converges to a finite limit.
\end{proof}

\section{Discussion}\label{app:discuss}
\subsection{Reducing Tied Comparisons }\label{app:diss_tie}
In practice, tied comparisons can arise when two models perform similarly or when the evaluator is uncertain.
Since ties provide weaker discriminative evidence than strict wins or losses and may reflect evaluator noise~\citep{zheng2023judging}, including them can dilute capability-aware signals.
Tie frequency can be reduced by using stronger evaluators or stricter judge prompts that reserve the tie option for cases where two responses are genuinely indistinguishable.

\subsection{Evaluation Judges}\label{app:dis_human}
Human judgments are an important source of preference signals in LLM evaluation~\citep{chiang2024chatbot}.
In practice, however, they are often collected from multiple annotators with heterogeneous preferences and varying reliability, which can introduce noise and systematic bias into pairwise comparisons.
Large-scale human evaluation is also substantially more expensive than automated evaluation.
Therefore, for each benchmark, we use its designated LLM-based judge or fixed evaluation protocol to obtain consistent pairwise signals within that benchmark and avoid annotator-dependent variability.

\subsection{Evaluation Efficiency of \tsf{ECC}}\label{app:dis_cost}
\tsf{ECC} is lightweight and does not require dense compute (e.g., GPU training~\citep{frick2025prompttoleaderboard}).
Its optimization alternates between (i) updating cluster-specific BT capability profiles and (ii) updating soft responsibilities, both of which are efficient CPU operations.
In our main setting with $K=30$, \tsf{ECC} converges within 32 iterations on SPROUT and RouterBench, and within 29 iterations on Open LLM Leaderboard.
Furthermore, fitting BT models from limited pairwise comparisons is substantially cheaper than training additional neural components with gradient-based optimization.
The main efficiency benefit of \tsf{ECC} lies in reducing evaluation cost.
Whereas some existing methods rely on exhaustively judging responses from all models for every query, \tsf{ECC} only requires a small subset of model outputs and a limited number of pairwise comparisons per query.
As shown in~\Cref{fig:abl_pairs}, using around three comparisons per query already yields strong clustering performance.
While \tsf{ECC} may require one probe signal for an unseen query (i.e., generating two model responses and performing one pairwise comparison), this overhead is $O(1)$ per query and does not scale with the number of candidate LLMs ($M$). 
Moreover, since the probe signal is robust to the choice of probing models (Table~\ref{tab:var_probe}), this probe can be obtained using lightweight models when evaluation cost is a concern.
Thus, \tsf{ECC} provides an evidence-efficient approach to LLM capability evaluation compared with methods that require substantially more model outputs or pairwise judgments.

\section{Extended Related Work}\label{app:related}
\noindent\textbf{Capability-Aware Clustering for LLM Evaluation.}
Recent work has moved beyond static, human-defined taxonomies and curated benchmark groupings~\citep{wang2019superglue, ying2024mmt, liu2024mmbench, srivastava2023beyond, al2024unibench} toward inducing latent capability structure from evaluation data for more fine-grained analysis~\citep{tian2025skillverse, moayeri2025unearthing, wei2025kcluster, luo2024llm, murahari2024qualeval}.
\citet{tian2025skillverse} introduces {SkillVerse}, a diagnosis framework that utilizes an ``LLM-as-a-Judge'' to critique responses and organize them into a hierarchical skill dendrogram, revealing granular capabilities often missed by human labeling. 
\citet{moayeri2025unearthing} extracts latent task attributes, such as subtasks and domains, from model evaluation data, and uses them as signals to inform unsupervised clustering of prompts.
On a related vein, \citet{wei2025kcluster} proposes {KCluster}, which leverages an LLM-induced question congruity metric to cluster assessment items into latent {knowledge components}, yielding clusters that predict student and model performance more accurately than expert-defined categories.
However, these methods typically induce capability structure through \emph{explicit} skill or attribute representations for diagnosis or performance prediction. 
In contrast, \tsf{ECC} does not rely on explicit skill extraction; instead, it directly integrates limited posterior evidence from pairwise model comparisons with prior semantic embeddings to learn \emph{soft} capability-aligned clusters tailored to LLM ranking over diverse queries. 
Furthermore, whereas prior work often yields discrete partitions that support group-level analysis once the grouping is fixed, \tsf{ECC} learns soft capability-aligned clusters that support \emph{query-conditional} inference over LLM capabilities, enabling stronger adaptation and generalization under limited comparison evidence.

\noindent\textbf{Sample-Efficient LLM Evaluation.}
Modern LLM evaluation is costly as a comprehensive assessment requires scoring all LLMs over thousands of benchmark records~\citep{li2025adaptivetestingllmevaluation, chiang2024chatbot}.
To reduce evaluation cost, recent work has developed sample-efficient methods based on adaptive item selection, benchmark compression, and comparison-based ranking~\citep{li2025adaptivetestingllmevaluation, kipnis2025metabench, chiang2024chatbot, son2025arena, song2025irt}.
For instance, Chatbot Arena~\citep{chiang2024chatbot} collects crowdsourced pairwise preferences and fits Bradley-Terry models~\citep{bradley1952rank} to produce scalable global leaderboards without exhaustively evaluating every model on every prompt.
In parallel, Item Response Theory (IRT) provides a principled measurement framework for improving sample efficiency~\citep{baker2001basics}.
\citet{song2025irt} proposed {IRT-Router}, which models latent model ability and query difficulty to support cost-effective multi-LLM routing, improving over semantic-routing baselines.
{ATLAS}~\citep{li2025adaptivetestingllmevaluation} uses IRT-based adaptive testing (e.g., Fisher-information-guided item selection) to estimate model ability accurately with far fewer benchmark questions, while {Metabench}~\citep{kipnis2025metabench} leverages redundancy in large-scale leaderboard data to distill sparse benchmarks that preserve score estimates.
Furthermore, {Arena-Lite}~\citep{son2025arena} improves ranking reliability under limited budgets by organizing LLM-judge comparisons in a tournament structure, reducing the number of required comparisons.
However, most prior work primarily targets sample-efficient estimation of \emph{global} model scores or leaderboards, which is a single ranking shared across all queries.
In contrast, we target \emph{capability-aware} ranking under heterogeneous query demands: \tsf{ECC} learns latent capability clusters and performs per-query posterior inference to produce a \emph{query-specific} mixture ranking over LLMs.
This enables fine-grained adaptation across queries while remaining effective under limited comparison budgets.

\noindent\textbf{Query Routing}.
Recent work studies how to match heterogeneous queries to the most suitable model under quality-cost constraints.
A common line of approaches invokes multiple models for each query, either by ensembling or reranking candidate responses~\citep{jiang2023llm, wang2023fusing} or by cascading models from cheaper to more expensive ones until a stopping criterion is met~\citep{chen2023frugalgpt, aggarwal2024automix, yue2023large, lee2023orchestrallm, song2025irt}. 
For instance, LLM-Blender~\citep{jiang2023llm}  combines and reranks responses from multiple LLMs to produce a final output, and Frugal-GPT~\citep{chen2023frugalgpt} sequentially queries models in increasing capability to reduce cost while maintaining quality.
Another line trains a router on historical or auxiliary data to estimate the performance and cost of each query and select the optimal LLM accordingly~\citep{ong2025routellmlearningroutellms, somerstep2025carrot, hu2024routerbench, ding2024hybrid, lu2023routing, wu2025efficient, shnitzer2023large, feng2024graphrouter, hari2023tryage, wang2025mixllm, stripelis2024tensoropera, jin2025radialrouter}.
For example, RouterLLM~\citep{ong2025routellmlearningroutellms} trains a lightweight BERT model on preference-labeled data to improve selection accuracy, and CARROT~\citep{somerstep2025carrot} learns predictors of both quality and cost to support flexible trade-offs.
RadialRouter~\citep{jin2025radialrouter} proposes a lightweight Transformer backbone, RadialFormer, to model query-LLM relationships for routing, while GraphRouter~\citep{feng2024graphrouter} uses a heterogeneous graph to capture interactions between query requirements and model capabilities.
These routing methods typically learn a per-query selector, whereas \tsf{ECC} learns a capability-aware clustering structure and outputs a query-specific mixture ranking over models, with routing as a special case (selecting the top-ranked model) while enabling broader capability-aware evaluation tasks.
One recent work~\citep{frick2025prompttoleaderboard} introduces Prompt-to-Leaderboard (P2L), which trains an LLM-based predictor on human-annotated pairwise comparisons to directly estimate query-conditioned BT coefficients, thereby inducing a per-query LLM leaderboard.
In contrast, \tsf{ECC} differs fundamentally in both goal and methodology. P2L focuses on a specific application that learns an amortized black-box predictor for per-query leaderboard generation (e.g., for query routing), without modeling any interpretable latent capability structure across queries. 
\tsf{ECC}, on the other hand, aims to uncover a fundamental capability-aware clustering structure that reveals how queries relate based on underlying capability demands -- a structure that is both interpretable and broadly applicable to multiple downstream tasks, including query routing and new model ranking.
Furthermore, P2L requires costly model training, whereas \tsf{ECC} is training-free and efficient, and executable on commodity hardware such as CPUs.

\section{Broader Impact}\label{app:imp}
This paper advances machine learning by proposing a clustering framework that enables more fine-grained, capability-aware LLM ranking across queries with diverse capability requirements. 
As a methodological contribution, it introduces limited direct societal risk. 
By providing more reliable LLM rankings, it can help users choose models that better match their needs, improving downstream application outcomes for a broader range of users.

\section{LLM Summaries for Cluster Overlap Qualitative Study}\label{app:cluster}
\subsection{Summarization Template}\label{app:prompts}

\begin{tcolorbox}[left=1mm, right=1mm, top=1mm, bottom=1mm, boxrule=0.9pt, breakable]
\begin{Verbatim}[breaklines=true]
You are given two clusters of prompts, Cluster_A and Cluster_B.
They are provided as two JSON arrays (lists of prompt strings):
the first array is Cluster_A, and the second array is Cluster_B.

Fairness constraints:
- Treat A and B symmetrically; do not assume any meaning from IDs or file names.
- Use ONLY the provided prompts. Do NOT use external dataset knowledge.
- First scan ALL prompts in each cluster to do a comprehensive analysis, then select representative evidence for the response.
- Do NOT judge which cluster is better.

Definitions:
- Semantic topic (surface): the dominant surface theme, stated in a way that distinguishes clusters
- Hidden capability: an underlying skill/behavior pattern/dominant bottleneck beyond surface semantics (e.g., multi-step constraint solving, symbolic manipulation, representation parsing, format compliance).

Task (MUST follow this order):

1) Cluster A (independent analysis)
- Topic scope: 1-2 sentences describing the surface semantic region (topic family).
- Hidden capabilities: 1-3 items (choose the smallest number that faithfully explains most prompts).
  For each capability:
  - name: concise, accurate
  - evidence_snippets: quote exactly 2 short snippets (<=12 words each),
    taken verbatim from DIFFERENT prompts in A, supporting this capability.
- Key patterns: 0-5 bullet points (task type, reasoning style, constraints, formatting, etc.).

2) Cluster B (independent analysis)
(Same required structure as A.)

3) Comparison (A vs B)
- Shared semantic topic region: 1 sentence (no boolean).
- Shared hidden-capability region: 1 sentence (no boolean), describing the broad capability family in 3-8 words.
- Shared hidden capabilities: list capability names that both clusters appear to require.
- Systematic Differences:
  - Capability differences: 0-4 bullets.
  - Semantic topic differences: 0-4 bullets.
  Each bullet should reference at least one evidence snippet already quoted above.

Return ONLY a JSON object:

{
  "cluster_A_summary": {
    "topic": "...",
    "hidden_capabilities": [
      { "name": "...", "evidence_snippets": ["...", "..."] }
    ],
    "key_patterns": ["..."]
  },
  "cluster_B_summary": {
    "topic": "...",
    "hidden_capabilities": [
      { "name": "...", "evidence_snippets": ["...", "..."] }
    ],
    "key_patterns": ["..."]
  },
  "comparison": {
    "shared_semantic_topic_region": "...",
    "shared_hidden_capability_region": "...",
    "shared_hidden_capabilities": ["..."],
    "capability_differences": ["..."],
    "semantic_topic_differences": ["..."],
    "evidence_summary": {
      "semantic_similarity_signal": "1-2 sentences describing BOTH (i) strongest semantic overlap and (ii) strongest semantic divergence, citing snippets already quoted above.",
      "capability_similarity_signal": "1-2 sentences describing BOTH (i) strongest capability overlap and (ii) strongest capability divergence, citing snippets already quoted above."
    }
  }
}
\end{Verbatim}
\end{tcolorbox} 

\subsection{Cell Comparison:(\texttt{ECC\_1}$\cap$\texttt{Emb\_1}) vs.\ (\texttt{ECC\_10}$\cap$\texttt{Emb\_1})}
\begin{tcolorbox}[left=1mm, right=1mm, top=1mm, bottom=1mm, boxrule=0.9pt, breakable]
\begin{Verbatim}[breaklines=true]
{
  "cluster_A_summary": {
    "topic": "Chemistry/biochemistry prompts centered on explaining and predicting reaction behavior (mechanisms, rate/selectivity) and linking molecular/coordination structure to function, often invoking spectroscopy or computational chemistry as the evidence source. ",
    "hidden_capabilities": [
      {
        "name": "Mechanism-based causal reasoning",
        "evidence_snippets": [
          "mechanism of the reaction between benzene and bromine",
          "reaction mechanism influence the rate of the reaction"
        ]
      },
      {
        "name": "Method-driven structure/property inference",
        "evidence_snippets": [
          "using infrared spectroscopy and proton NMR spectroscopy data interpretation",
          "based on Density Functional Theory calculations"
        ]
      }
    ],
    "key_patterns": [
      "Frequent 'how does X affect Y' framing (conditions → rate/selectivity/structure).",
      "Requires mapping structural/electronic features to reactivity or function (often metals/coordination).",
      "Often asks to justify claims with a method or evidence channel (spectroscopy, DFT, MD, docking).",
      "Many prompts demand step-by-step mechanisms, intermediates, or catalytic roles."
    ]
  },
  "cluster_B_summary": {
    "topic": "Applied chemistry prompts spanning drug/antibiotic design, catalysis, and biochemical pathways, with many questions about optimizing efficacy/safety and using simulations/analytical methods to justify decisions. ",
    "hidden_capabilities": [
      {
        "name": "Mechanism-based causal reasoning",
        "evidence_snippets": [
          "mechanism for the electrophilic addition of hydrogen bromide",
          "mechanism of ligand exchange in metal-organic frameworks (MOFs)"
        ]
      },
      {
        "name": "Constraint-aware molecular design",
        "evidence_snippets": [
          "specifically targets the cell wall of Gram-negative bacteria",
          "without harming healthy cells in the body"
        ]
      },
      {
        "name": "Simulation/measurement protocol design",
        "evidence_snippets": [
          "Use Molecular Dynamics simulations to investigate the interactions",
          "Using quantum chemistry calculations, analyze the effects of solvent polarity"
        ]
      }
    ],
    "key_patterns": [
      "Design prompts emphasize specificity, off-target minimization, and safety/efficacy tradeoffs.",
      "Mixes mechanistic organic chemistry (addition/substitution) with bio/medical contexts.",
      "Repeated parameter-sweep reasoning (temperature/pH/concentration) and optimization framing.",
      "Often requests an explicit method plan (MD/QC/analytical technique) rather than only explanation.",
      "Occasional strict output schema (e.g., 'The answer is ...')"
    ]
  },
  "comparison": {
    "shared_semantic_topic_region": "Both clusters sit in advanced chemistry/biochemistry questions about mechanisms, structure–property links, and method-supported analysis. ",
    "shared_hidden_capability_region": "Mechanistic, method-backed scientific reasoning",
    "shared_hidden_capabilities": [
      "Mechanism-based causal reasoning",
      "Method-driven structure/property inference"
    ],
    "capability_differences": [
      "B more often adds explicit design constraints (\"without harming healthy cells in the body\") compared with A’s analysis-first framing (\"based on Density Functional Theory calculations\").",
      "A more consistently foregrounds inferring structure/properties from analytic evidence (\"using infrared spectroscopy and proton NMR spectroscopy data interpretation\") than B’s broader protocol-planning (\"Use Molecular Dynamics simulations to investigate the interactions\").",
      "B includes more end-to-end synthesis/therapy-oriented optimization (\"specifically targets the cell wall of Gram-negative bacteria\") than A’s reaction-mechanism focus (\"mechanism of the reaction between benzene and bromine\")."
    ],
    "semantic_topic_differences": [
      "A leans toward mechanistic/physical and coordination-chemistry explanations (\"reaction mechanism influence the rate of the reaction\").",
      "B leans toward biomedical/drug-design targets and applied constraints (\"specifically targets the cell wall of Gram-negative bacteria\")."
    ],
    "evidence_summary": {
      "semantic_similarity_signal": "Strongest overlap is mechanistic chemistry phrasing (\"reaction mechanism influence the rate of the reaction\"; \"mechanism for the electrophilic addition of hydrogen bromide\"). Strongest divergence is A’s spectroscopy/DFT emphasis (\"using infrared spectroscopy and proton NMR spectroscopy data interpretation\") versus B’s therapeutic targeting language (\"specifically targets the cell wall of Gram-negative bacteria\").",
      "capability_similarity_signal": "Both require translating a prompt into a causal account with justified steps (\"mechanism of the reaction between benzene and bromine\"; \"mechanism of ligand exchange in metal-organic frameworks (MOFs)\"). Divergence is that B more frequently layers constraint satisfaction (\"without harming healthy cells in the body\") on top of the scientific reasoning, while A more often centers evidence-based inference (\"based on Density Functional Theory calculations\")."
    }
  }
}
\end{Verbatim}
\end{tcolorbox}

\subsection{Cell Comparison: (\texttt{ECC\_1}$\cap$\texttt{Emb\_10}) vs.\ (\texttt{ECC\_10}$\cap$\texttt{Emb\_10})}
\begin{tcolorbox}[left=1mm, right=1mm, top=1mm, bottom=1mm, boxrule=0.9pt, breakable]
\begin{Verbatim}[breaklines=true]
{
  "cluster_A_summary": {
    "topic": "Materials chemistry prompts centered on tuning nano/molecular material properties (electronic/optical/photochemical, catalytic, interfacial) via composition, size/shape, and surface/processing choices, often framed around semiconductors, nanoparticles, polymers, and functional materials. ",
    "hidden_capabilities": [
      {
        "name": "Variable→property sensitivity reasoning",
        "evidence_snippets": [
          "dopant concentration affect these properties",
          "variations in their size, shape, and composition"
        ]
      },
      {
        "name": "Physics-based computational prediction",
        "evidence_snippets": [
          "density functional theory (DFT)",
          "Using molecular dynamics simulations, investigate"
        ]
      },
      {
        "name": "Synthesis route optimization planning",
        "evidence_snippets": [
          "synthesizing carbon nanotubes with controlled chirality",
          "optimized to improve their catalytic performance"
        ]
      }
    ],
    "key_patterns": [
      "Frequent causal framing: how a controllable factor changes a property.",
      "Strong emphasis on nanoscale structure/property links (size/shape/composition).",
      "Recurring computational-method prompts (DFT/MD) asking to predict/compare properties.",
      "Often posed as “optimize/design a method/material” toward performance targets."
    ]
  },
  "cluster_B_summary": {
    "topic": "Materials/polymer engineering prompts emphasizing practical optimization of synthesis/processing and characterization for application outcomes (e.g., 3D printing, sensors, drug delivery, coatings, photovoltaics, colloids/emulsions). ",
    "hidden_capabilities": [
      {
        "name": "Process parameter optimization reasoning",
        "evidence_snippets": [
          "optimized to achieve the highest absorption capacity",
          "processing parameters or equipment can be made"
        ]
      },
      {
        "name": "Application-driven system design",
        "evidence_snippets": [
          "develop a novel drug delivery system",
          "polymer-based sensor is suitable for detecting heavy metal ions"
        ]
      },
      {
        "name": "Experimental protocol articulation",
        "evidence_snippets": [
          "step-by-step explanation of the DMA technique",
          "determine the molecular weight distribution"
        ]
      }
    ],
    "key_patterns": [
      "More explicit “how-to” framing around optimization of processing/synthesis conditions.",
      "Application contexts recur (biomedical delivery, sensing, 3D printing, coatings).",
      "More prompts request measurement/characterization workflows and technique choice.",
      "Often asks for tradeoffs among performance, stability, and practical constraints."
    ]
  },
  "comparison": {
    "shared_semantic_topic_region": "Both clusters sit in materials/polymer/nanomaterials chemistry and engineering, focusing on how controllable design or processing choices impact functional properties.",
    "shared_hidden_capability_region": "property–performance optimization reasoning",
    "shared_hidden_capabilities": [
      "Variable→property sensitivity reasoning",
      "Synthesis route optimization planning",
      "Process parameter optimization reasoning"
    ],
    "capability_differences": [
      "A leans more on first-principles/computational inference (\"density functional theory (DFT)\") while B leans more on hands-on characterization workflows (\"step-by-step explanation of the DMA technique\").",
      "B foregrounds operational process tuning (\"processing parameters or equipment can be made\") more than A’s emphasis on mechanistic variable sweeps at the material level (\"variations in their size, shape, and composition\").",
      "A’s optimization is often about creating/tuning advanced materials (\"synthesizing carbon nanotubes with controlled chirality\"), whereas B’s optimization is often framed as improving deployable performance in applied systems (\"develop a novel drug delivery system\")."
    ],
    "semantic_topic_differences": [
      "A skews toward nano/optoelectronic and structure-driven property questions (\"variations in their size, shape, and composition\"), while B more often anchors to manufacturing/processing contexts (\"processing parameters or equipment can be made\").",
      "B includes more explicit biomedical/sensing application prompts (\"develop a novel drug delivery system\"; \"polymer-based sensor is suitable for detecting heavy metal ions\"), which are less foregrounded in the A evidence."
    ],
    "evidence_summary": {
      "semantic_similarity_signal": "Strong overlap: both ask how controllable knobs change functional properties (e.g., \"dopant concentration affect these properties\" and \"optimized to achieve the highest absorption capacity\"). Strong divergence: A emphasizes nanoscale structure/composition levers (\"variations in their size, shape, and composition\"), while B emphasizes processing/implementation levers (\"processing parameters or equipment can be made\").",
      "capability_similarity_signal": "Strong overlap: both demand optimization-style reasoning over materials decisions (\"optimized to improve their catalytic performance\"; \"optimized to achieve the highest absorption capacity\"). Strong divergence: A more often requires physics-based modeling/prediction (\"density functional theory (DFT)\") whereas B more often requires procedural measurement articulation (\"step-by-step explanation of the DMA technique\")."
    }
  }
}
\end{Verbatim}
\end{tcolorbox}

\subsection{Cell Comparison: (\texttt{ECC\_1}$\cap$\texttt{Emb\_1}) vs.\ (\texttt{ECC\_1}$\cap$\texttt{Emb\_10})}
\begin{tcolorbox}[left=1mm, right=1mm, top=1mm, bottom=1mm, boxrule=0.9pt, breakable]
\begin{Verbatim}[breaklines=true]
{
  "cluster_A_summary": {
    "topic": "Chemistry/biochemistry prompts centered on reaction behavior (rate/selectivity/mechanism) and structure–function questions for molecules, enzymes, and metal complexes, often framed as “how does X affect Y?” ",
    "hidden_capabilities": [
      {
        "name": "Parameter–response reasoning",
        "evidence_snippets": [
          "concentration of the reactants affect the rate of the chemical reaction",
          "reaction mechanism for the conversion of an alcohol to an alkene"
        ]
      },
      {
        "name": "Simulation-based property prediction",
        "evidence_snippets": [
          "molecular docking studies be used to identify potential inhibitors",
          "calculated using quantum chemistry methods"
        ]
      }
    ],
    "key_patterns": [
      "Frequent causal prompts varying conditions (concentration, pH, light intensity, catalyst).",
      "Emphasis on explaining mechanisms/intermediates and linking them to rate/selectivity.",
      "Regular calls for computational approaches (quantum chemistry, docking, molecular dynamics).",
      "Often requests examples, comparisons, or experimental/measurement evidence."
    ]
  },
  "cluster_B_summary": {
    "topic": "Materials/nanoscience prompts about tuning and optimizing material properties (optical/electronic/mechanical) for devices and applications (e.g., displays, solar cells), commonly via size/shape/composition/surface changes. ",
    "hidden_capabilities": [
      {
        "name": "Parameter–response reasoning",
        "evidence_snippets": [
          "optical and electrical properties of liquid crystals be tuned",
          "size of a quantum dot affect its electronic and optical properties"
        ]
      },
      {
        "name": "Synthesis/processing optimization",
        "evidence_snippets": [
          "synthesis process of carbon nanotubes be optimized for large-scale production",
          "surface modification of nanoparticles affect their stability in a given solvent"
        ]
      },
      {
        "name": "Simulation-based property prediction",
        "evidence_snippets": [
          "density functional theory calculations",
          "Monte Carlo simulations"
        ]
      }
    ],
    "key_patterns": [
      "Strong focus on material property optimization for performance (efficiency, durability, stability).",
      "Repeated size/shape/composition/surface-chemistry perturbations as levers for properties.",
      "Frequent ask for synthesis routes and scalable, cost-effective, or green processes.",
      "Common invocation of computational tools (DFT, Monte Carlo, quantum chemistry)."
    ]
  },
  "comparison": {
    "shared_semantic_topic_region": "Both clusters ask how changing conditions or structure alters physical/chemical behavior and performance-relevant properties.",
    "shared_hidden_capability_region": "Parameter–response causal modeling",
    "shared_hidden_capabilities": [
      "Parameter–response reasoning",
      "Simulation-based property prediction"
    ],
    "capability_differences": [
      "Cluster A more often centers mechanistic explanation of transformations and selectivity (e.g., \"reaction mechanism for the conversion of an alcohol to an alkene\"), whereas Cluster B more often centers end-to-end optimization of making/processing materials (e.g., \"synthesis process of carbon nanotubes be optimized for large-scale production\").",
      "Both reference computation, but Cluster A’s computation is frequently tied to molecular interaction/design (e.g., \"molecular docking studies be used to identify potential inhibitors\"), while Cluster B more frequently highlights materials-physics simulation toolkits (e.g., \"density functional theory calculations\")."
    ],
    "semantic_topic_differences": [
      "Cluster A foregrounds reaction/biomolecular settings (e.g., \"concentration of the reactants affect the rate of the chemical reaction\"), while Cluster B foregrounds engineered materials and device contexts (e.g., \"optical and electrical properties of liquid crystals be tuned\").",
      "Cluster B’s size-driven materials framing (e.g., \"size of a quantum dot affect its electronic and optical properties\") contrasts with Cluster A’s transformation-centric framing (e.g., \"reaction mechanism for the conversion of an alcohol to an alkene\")."
    ],
    "evidence_summary": {
      "semantic_similarity_signal": "Strongest overlap is the shared “vary X, observe Y” framing (\"concentration of the reactants affect the rate of the chemical reaction\" vs \"size of a quantum dot affect its electronic and optical properties\"). Strongest divergence is Cluster A’s reaction-transformation emphasis (\"reaction mechanism for the conversion of an alcohol to an alkene\") versus Cluster B’s device/material tuning emphasis (\"optical and electrical properties of liquid crystals be tuned\").",
      "capability_similarity_signal": "Strongest capability overlap is prediction via modeling/simulation (\"calculated using quantum chemistry methods\" alongside \"density functional theory calculations\" / \"Monte Carlo simulations\"). Strongest divergence is Cluster B’s emphasis on synthesis and surface/process levers (\"surface modification of nanoparticles affect their stability in a given solvent\"), whereas Cluster A more often frames computation around molecular binding/design questions (\"molecular docking studies be used to identify potential inhibitors\")."
    }
  }
}
\end{Verbatim}
\end{tcolorbox} 

\subsection{Cell Comparison: (\texttt{ECC\_10}$\cap$\texttt{Emb\_1}) vs.\ (\texttt{ECC\_10}$\cap$\texttt{Emb\_10})}
\begin{tcolorbox}[left=1mm, right=1mm, top=1mm, bottom=1mm, boxrule=0.9pt, breakable]
\begin{Verbatim}[breaklines=true]
{
  "cluster_A_summary": {
    "topic": "Chemistry and biochemistry prompts centered on designing therapeutics and explaining/quantifying chemical behavior (drug selectivity, reaction rates, and analytical measurements) across biomedical and environmental contexts. ",
    "hidden_capabilities": [
      {
        "name": "Selective intervention design",
        "evidence_snippets": [
          "selectively targets cancer cells while sparing healthy cells",
          "without affecting the related enzyme Cyclooxygenase-1 (COX-1)"
        ]
      },
      {
        "name": "Causal kinetics under parameter changes",
        "evidence_snippets": [
          "varying the concentration of a reactant",
          "using a rate equation for the reaction in question"
        ]
      },
      {
        "name": "Analytical quantification planning",
        "evidence_snippets": [
          "determine the concentration of caffeine in a sample of coffee",
          "concentration of lead in a water sample"
        ]
      }
    ],
    "key_patterns": [
      "Design/modify drugs and inhibitors with selectivity and side-effect constraints.",
      "Reason about how changing conditions (concentration, temperature, catalysts) alters outcomes.",
      "Frequent emphasis on measurement workflows (e.g., chromatography/spectroscopy) and interpreting what they reveal."
    ]
  },
  "cluster_B_summary": {
    "topic": "Materials-science prompts emphasizing polymers, nanoparticles, and functional materials, focusing on how composition/structure and processing choices govern mechanical, optical, electronic, and transport properties (often for energy, sensing, and coatings). ",
    "hidden_capabilities": [
      {
        "name": "Structure–property mapping",
        "evidence_snippets": [
          "size and shape of a gold nanoparticle",
          "degree of crystallinity of a polymer affect its mechanical properties"
        ]
      },
      {
        "name": "Multi-objective process optimization",
        "evidence_snippets": [
          "optimized while maintaining high selectivity",
          "optimize the printing conditions for a specific polymer-based material"
        ]
      },
      {
        "name": "Modeling/simulation method integration",
        "evidence_snippets": [
          "Use molecular dynamics simulations to investigate",
          "Monte Carlo simulation"
        ]
      }
    ],
    "key_patterns": [
      "Repeated structure/composition → performance questions (optical, mechanical, conductive, catalytic).",
      "Optimization framed as tuning synthesis/processing knobs (printing, membranes, blends, coatings).",
      "Regular calls for computational approaches (MD/MC/DFT) alongside characterization/measurement."
    ]
  },
  "comparison": {
    "shared_semantic_topic_region": "Both clusters live in chemistry-driven design-and-analysis problems that connect controllable factors to measurable outcomes. ",
    "shared_hidden_capability_region": "Structure–function optimization and causal analysis",
    "shared_hidden_capabilities": [
      "Optimization under constraints",
      "Causal reasoning over parameter changes",
      "Method selection and experimental/computational planning"
    ],
    "capability_differences": [
      "A foregrounds biological/therapeutic selectivity (\"selectively targets cancer cells while sparing healthy cells\"), while B foregrounds material structure→property links (\"degree of crystallinity of a polymer affect its mechanical properties\").",
      "B more often treats optimization as engineering/process tuning (\"optimize the printing conditions for a specific polymer-based material\"), whereas A more often formalizes changes via kinetic law/derivation (\"using a rate equation for the reaction in question\").",
      "B more explicitly integrates computational simulation workflows (\"Use molecular dynamics simulations to investigate\"; \"Monte Carlo simulation\"), while A’s exemplars here emphasize measurement/assay planning (\"determine the concentration of caffeine in a sample of coffee\"; \"concentration of lead in a water sample\")."
    ],
    "semantic_topic_differences": [
      "A’s surface topics skew toward drugs/biomedical chemistry (\"selectively targets cancer cells while sparing healthy cells\") and chemical analytics (\"concentration of lead in a water sample\").",
      "B’s surface topics skew toward polymers/nanomaterials and manufacturing constraints (\"size and shape of a gold nanoparticle\"; \"optimize the printing conditions for a specific polymer-based material\").",
      "B includes explicit selectivity–throughput tradeoffs in separation materials (\"optimized while maintaining high selectivity\"), whereas A’s tradeoffs are framed as efficacy vs off-target/side effects (\"without affecting the related enzyme Cyclooxygenase-1 (COX-1)\")."
    ],
    "evidence_summary": {
      "semantic_similarity_signal": "Strongest overlap is the shared focus on tuning controllable factors to reach a target outcome (A: \"varying the concentration of a reactant\"; B: \"optimized while maintaining high selectivity\"). Strongest divergence is the application domain framing—therapeutics/biological targeting in A (\"selectively targets cancer cells while sparing healthy cells\") versus materials processing/structure in B (\"size and shape of a gold nanoparticle\").",
      "capability_similarity_signal": "Both clusters require multi-factor causal reasoning and design under constraints (A: \"without affecting the related enzyme Cyclooxygenase-1 (COX-1)\"; B: \"optimized while maintaining high selectivity\"). The sharpest capability divergence is B’s repeated reliance on explicit simulation toolchains (\"Use molecular dynamics simulations to investigate\", \"Monte Carlo simulation\") versus A’s emphasis on formal kinetic/measurement formulations (\"using a rate equation for the reaction in question\", \"determine the concentration of caffeine in a sample of coffee\")."
    }
  }
}
\end{Verbatim}
\end{tcolorbox} 

\section{Example Prompts Referenced in Cluster Overlap Analysis}\label{app:examples}
\subsection{Example Prompts in Cell \texttt{ECC\_1}$\cap$\texttt{Emb\_1}}
\begin{tcolorbox}[left=1mm, right=1mm, top=1mm, bottom=1mm, boxrule=0.9pt, breakable]
\begin{Verbatim}[breaklines=true]
"What is the mechanism of the reaction between benzene and bromine to form bromobenzene?",
"What is the mechanism of the reaction between benzene and bromine in the presence of an iron catalyst? Describe the intermediates formed during the process and explain the role of the catalyst in the reaction.",
"Identify the unknown organic compound (molecular formula C10H12O) using infrared spectroscopy and proton NMR spectroscopy data interpretation.",
"How does the surface energy of a metal change when a foreign atom is adsorbed onto the surface, based on Density Functional Theory calculations?",
"How does the concentration of the reactants affect the rate of the chemical reaction and the overall yield of the product in the specific reaction between potassium iodide and hydrogen peroxide?",
"What is the reaction mechanism for the conversion of an alcohol to an alkene using a strong acid catalyst, and what factors affect the selectivity of this reaction?",
"What is the relationship between the electronic excited states and the optical properties of a certain molecule, and how can this be calculated using quantum chemistry methods?",
"How does temperature affect the phase transition of water from solid (ice) to liquid using Monte Carlo simulations in computational chemistry?"
\end{Verbatim}
\end{tcolorbox}

\subsection{Example Prompts in Cell \texttt{ECC\_10}$\cap$\texttt{Emb\_1}}
\begin{tcolorbox}[left=1mm, right=1mm, top=1mm, bottom=1mm, boxrule=0.9pt, breakable]
\begin{Verbatim}[breaklines=true]
"What is the mechanism of ligand exchange in metal-organic frameworks (MOFs) and how does it impact the properties and applications of the material? Compare and contrast the ligand exchange mechanisms observed in MOFs with those observed in coordination polymers.",
"How can we design a drug that specifically targets the cell wall of gram-negative bacteria, such as Escherichia coli, without affecting the cells of the human host?",
"How can we use structure-based drug design to develop a new antibiotic that specifically targets the cell wall of Gram-negative bacteria, without affecting the healthy cells of the host organism?",
"How can we design a drug that specifically targets and inhibits the growth of HER2-positive breast cancer cells without harming healthy cells in the body?",
"How does the surface chemistry of gold nanoparticles affect their stability and reactivity in a solvent environment? Use Molecular Dynamics simulations to investigate the interactions between the gold nanoparticle and the solvent molecules, and explore the role of various types of ligands on the stability of the nanoparticle.",
"How can we design a drug that selectively targets cancer cells while sparing healthy cells in the body?",
"How can you design a drug that specifically targets the enzyme Cyclooxygenase-2 (COX-2) without affecting the related enzyme Cyclooxygenase-1 (COX-1)?",
"How does varying the initial reactant concentration and temperature affect the rate of reaction with respect to reactant orientation? Express your answer quantitatively using a rate equation for the reaction in question.",
"How does the binding affinity of a protein-ligand complex change with increasing temperature? Use molecular dynamics simulations to investigate the effect of temperature on the stability and dynamics of the complex, and determine the changes in binding free energy at different temperatures.",
"How does the use of Monte Carlo simulations enhance the understanding of the mechanisms and kinetics of gas-phase reactions in atmospheric chemistry?\n", "Using principles of inorganic chemistry, predict the stability of the following two inorganic compounds: \n1) Sodium chloride (NaCl)\n2) Ammonium chloride (NH4Cl) \n\nBriefly explain the factors contributing to the stability of each compound based on their structures and bonding."
\end{Verbatim}
\end{tcolorbox}

\subsection{Example Prompts in Cell \texttt{ECC\_1}$\cap$\texttt{Emb\_10}}
\begin{tcolorbox}[left=1mm, right=1mm, top=1mm, bottom=1mm, boxrule=0.9pt, breakable]
\begin{Verbatim}[breaklines=true]
"What are the effects of doping on the electrical and optical properties of semiconductors, and how does the dopant concentration affect these properties?",
"How does the calculated bandgap of a material using density functional theory (DFT) compare to its experimental value? Can DFT calculations accurately predict the electronic properties of a material? Provide a comprehensive analysis of previous studies on this topic, including the challenges and limitations associated with DFT calculations of electronic properties of materials.",
"How can the electronic and magnetic properties of metal-organic frameworks (MOFs) be predicted using quantum chemistry methods such as density functional theory (DFT)?",
"What is the effect of adding different types of impurity scattering on the electronic and transport properties of graphene nanoribbon using Density Functional Theory (DFT) calculations?",
"How does the electronic transport property (conductivity) of a graphene nanoribbon change as the width of the ribbon is varied, as predicted by density functional theory (DFT) calculations?",
"How does the accuracy of Density Functional Theory (DFT) calculations affect the prediction of surface properties in catalysis reactions?",
"How does the choice of exchange-correlation functional in density functional theory (DFT) calculations impact the prediction of the electronic properties of materials such as band gap, optical absorption spectra, and charge carrier mobility? Compare the results obtained using different exchange-correlation functionals and discuss their advantages and limitations in predicting the electronic properties of materials.",
"How does the size and shape of a nanoparticle affect its behavior and stability under varying environmental conditions? Using molecular dynamics simulations, investigate the changes in nanoparticle properties such as surface area, reactivity, and structural integrity as different geometries and sizes are explored. Is there an optimal size and shape for nanoparticles to exhibit maximum stability and desirable properties, and how does this affect their potential applications in fields such as medicine, catalysis, and electronics?",
"How do quantum dots' photochemical properties change with variations in their size, shape, and composition, and how can this knowledge be applied to enhance their applications in areas such as biological imaging and solar cell technology?",
"How can the optical and electrical properties of liquid crystals be tuned to improve the efficiency and performance of liquid crystal displays?",
"How does the size of a quantum dot affect its electronic and optical properties? Provide a detailed explanation supported by calculations using quantum chemistry methods.",
"What is the relationship between the electronic and optical properties of perovskite materials, and how can this relationship be accurately calculated using quantum chemistry methods?",
"What is the effect of doping metal oxides with transition metals on the electronic properties of the material using density functional theory calculations?",
"How does temperature affect the phase transition of water from solid to liquid using Monte Carlo simulations?"
\end{Verbatim}
\end{tcolorbox}

\subsection{Example Prompts in Cell \texttt{ECC\_10}$\cap$\texttt{Emb\_10}}
\begin{tcolorbox}[left=1mm, right=1mm, top=1mm, bottom=1mm, boxrule=0.9pt, breakable]
\begin{Verbatim}[breaklines=true]
"How can the synthesis of polymer-based superabsorbent materials be optimized to achieve the highest absorption capacity and stability under different environmental conditions?",
"How can we improve the efficiency of extrusion processing for a specific polymer blend used in the manufacturing of plastic bottles? What changes in the processing parameters or equipment can be made to achieve a higher production rate and better quality of the final product while minimizing waste?",
"How can Dynamic Mechanical Analysis (DMA) be used to determine the glass transition temperature of a polymer? Provide a step-by-step explanation of the DMA technique and its application to polymer characterization.",
"How can we determine the molecular weight distribution of a polymer using gel permeation chromatography (GPC)?",
"How can we use gel permeation chromatography (GPC) to determine the molecular weight distribution of a polymer sample?",
"How does the degree of crystallinity of a polymer affect its mechanical properties?",
"How can we optimize the printing conditions for a specific polymer-based material to achieve the desired mechanical and functional properties in 3D printed parts?",
"How can the permeability of polymer-based membranes used in water treatment be optimized while maintaining high selectivity to remove contaminants?",
"How does the size and shape of a gold nanoparticle influence its surface plasmon resonance frequency? Use molecular dynamics simulations to investigate the behavior of different sized and shaped gold nanoparticles in a solvent environment.",
"How does changing the size of gold nanoparticles affect their thermodynamic stability when interacting with a solvent molecule? Use molecular dynamics simulations to investigate the differences in stability between nanoparticles with diameters of 5 nm, 10 nm, and 15 nm.",
"How do different simulation parameters (e.g. temperature, pressure, chain length) affect the behavior of polymer systems when using the Monte Carlo simulation method?"
\end{Verbatim}
\end{tcolorbox}


\end{document}